\newtheorem{definition}{Definition}
\newtheorem{lem}{Lemma}
\newtheorem{prop}{Proposition}
\newtheorem{theorem}{Theorem}
\begin{document}

%

%

\twocolumn[

\aistatstitle{On Model Selection Consistency of Lasso for High-Dimensional Ising Models}

\aistatsauthor{Xiangming Meng \And Tomoyuki Obuchi \And  Yoshiyuki Kabashima}

\aistatsaddress{The University of Tokyo \And   Kyoto University \And The University of Tokyo } ]

\begin{abstract}
We theoretically analyze the model selection consistency of least absolute shrinkage and selection operator (Lasso), {\textit{both with and without post-thresholding}}, for high-dimensional Ising models. For random regular (RR) graphs of size $p$ with regular  node degree $d$ and uniform couplings $\theta_0$, it is rigorously proved that Lasso \textit{without post-thresholding} is model selection consistent in the  whole paramagnetic phase with the same  order of sample complexity $n=\Omega{(d^3\log{p})}$ as that of $\ell_1$-regularized logistic regression ($\ell_1$-LogR). This result is consistent with the conjecture in  \textit{Meng, Obuchi, and Kabashima 2021} \cite{meng2021ising} using the non-rigorous replica method from statistical physics  and thus complements it with a rigorous proof. For general tree-like graphs, it is demonstrated that the same result as RR graphs can be obtained under mild assumptions of the dependency condition and incoherence condition. Moreover, we provide a rigorous proof of the model selection consistency of Lasso \textit{with post-thresholding} for general tree-like graphs in the paramagnetic phase without further assumptions on the dependency and incoherence conditions. 
Experimental results agree well with our theoretical analysis.
\end{abstract}

\section{Introduction}
Ising model \cite{ising1925beitrag} is one renowned binary undirected graphical models (also known
as Markov random fields (MRFs)) \cite{wainwright2008graphical,koller2009probabilistic, mezard2009information} with wide applications in various scientific disciplines such as social networking \cite{mcauley2012learning}, gene network analysis \cite{marbach2012wisdom,krishnan2020modified}, and protein interactions \cite{morcos2011direct,liebl2021accurate}, just to name a few. Given an undirected graph $G=\left(V,E\right)$,
where $V=\left\{ 1,...,p\right\} $ is a collection of nodes associated with the binary spins $X=\left(X_{i}\right)_{i=1}^{p}$ and $E=\left\{ \left(r,t\right)|\theta_{rt}^{*}\neq0\right\} $
is a collection of undirected edges that specify the pairwise interactions $\theta^{*}=\left(\theta_{rt}^{*}\right)_{r\neq t}$, the joint probability distribution 
of an Ising model  
has the following form
\begin{equation}
\mathbb{P}_{\theta^{*}}\left(x\right)=\frac{1}{Z\left(\theta^{*}\right)}\exp\big\{ \sum_{r\neq t}\theta_{rt}^{*}x_{r}x_{t}\big\} ,\label{eq:Ising_distribution}
\end{equation}
where $Z\left(\theta^{*}\right)=\sum_{x}\exp\left\{ \sum_{r\neq t}\theta_{rt}^{*}x_{r}x_{t}\right\}$
is the partition function. In general, there are also external
fields but here they are assumed to be zero for simplicity. Importantly, the conditional independence between $X=\left(X_{i}\right)_{i=1}^{p}$ can be well captured by the associated graph $G$ \cite{wainwright2008graphical,koller2009probabilistic} and hence one fundamental problem, namely  Ising model selection {(also widely known as the inverse Ising problem.  Please refer to \cite{nguyen2017inverse} for a nice review)}, is to recover the underlying graph structure
(edge set $E$) of $G$ from a collection of $n$ i.i.d. samples $\mathfrak{X}_n\coloneqq\left\{ x^{(1)},\ldots,x^{(n)}\right\} $, where $x^{(i)}\in\left\{ -1,+1\right\} ^{p}$
represents the $i$-th sample.
To address this fundamental problem, a variety of methods have been proposed over the past several decades in various fields \cite{tanaka1998mean,kappen1998efficient,ricci2012bethe, wainwright2007high,hofling2009estimation,ravikumar2010high,decelle2014pseudolikelihood,bresler2015efficiently,vuffray2016interaction,lokhov2018optimal}. Notably, under the framework of the pseudo-likelihood (PL) \cite{besag1975statistical}, both $\ell_{1}$-regularized logistic regression ($\ell_{1}$-LogR) \cite{ravikumar2010high} and $\ell_{1}$-regularized interaction screening estimator (RISE) \cite{vuffray2016interaction,lokhov2018optimal} are the two most popular methods in reconstructing the graph structure and the number of samples required is even near-optimal with respect to (w.r.t.) previously established information-theoretic lower-bound \cite{santhanam2012information}.

In this paper, we consider the well-known least absolute shrinkage
and selection operator (Lasso) \cite{tibshirani1996regression} for Ising model selection. At first sight, one might even doubt its suitability for this problem since apparently the Ising snapshots are binary data generated in a nonlinear manner while Lasso is (presumably) used for continuous data with linear regression. In fact, the idea of using linear regression for binary data is not as outrageous (or naive) as one might imagine \cite{brillinger1982generalized,dobriban2018high,erdogdu2019scalable}, and perhaps surprisingly, sometimes linear regression even outperforms logistic regression as demonstrated in \cite{gomila2021logistic}. Indeed, if our goal is to make predictions of new outcomes, say binary classification, then linear regression might not be a good choice since it is easily prone to out-of-bound forecasts\footnote{In fact, even for classification, linear regression is widely used, e.g., ridge classification \cite{dobriban2018high}, which can be significantly faster than logistic regression with a high number of classes \cite{RidgeClassifier}.}. However, when it comes to other goals such as estimating variables or causal effects \cite{gomila2021logistic}, the answer becomes highly nontrivial. For Ising model selection, the goal is not about making predictions of new binary outcomes, but rather inferring the graph structure and thus deciphering the underlying conditional independence between different variables. Hence, given the popularity of Lasso, it is of both practical and theoretical significance to study the (mis-specified) Lasso's \textit{model selection consistency} for the nonlinear Ising models, i.e., under what conditions Lasso can (or cannot) successfully recover the true structure of Ising model. While several early studies \cite{bento2009graphical,lokhov2018optimal, meng2020structure,meng2021ising} have implied Lasso's potential consistency for Ising model selection, a rigorous theoretical analysis has still largely remained  unresolved.

\subsection{Our Contributions}
We theoretically analyze the model selection consistency of Lasso, \textit{both with and without post-thresholding}, for Ising models in the  high-dimensional
($n\ll p$) regime, where the number of vertices $p=p\left(n\right)$ may also scale as a function of the sample
size $n$. The paramagnetic phase of Ising models is considered where the coupling strength is relatively small so that the expectation of the magnetization  $m:=\frac{1}{p}\sum_{i=1}^p x_i$  is zero \cite{nishimori2001statistical, mezard2009information}. 
Our main contributions are summarized as follows. 

(a) For random regular (RR) graphs with regular  node degree $d$ and  uniform active couplings $\theta^{*}_{r,t} = \theta_0,\forall (r,t)\in E$, in  the paramagnetic phase, i.e., $(d-1)\tanh{\theta_0}<1$, we prove that Lasso without post-thresholding is  model selection consistent for Ising models, and remarkably, the required sample complexity has the same scaling order as that of $\ell_1$-LogR.  (Theorem \ref{theorem-RR-Lasso-no-post})

(b) For general tree-like graphs, under mild assumptions of the \textit{dependency condition} and \textit{incoherence condition}, it is proved that Lasso without post-thresholding is still model selection consistent for Ising models with the same order of  sample complexity as that of $\ell_1$-LogR. (Theorem \ref{theorem-main-tree-graph-noPost}) 

(c) For general tree-like graphs, we not only obtain an upper  bound of the reconstructed square error of Lasso, but also prove that, with some proper post-thresholding, Lasso is model selection consistent with the same order of  sample complexity as that of $\ell_1$-LogR and RISE without any further assumptions on the \textit{dependency} and \textit{incoherence conditions}. (Theorems \ref{theorem-square-error} and  \ref{theorem-Post-Thresholding})  

\textit{Remark} 1: It is worth strengthening that in this paper we  focus on Lasso \textit{both with and without post-theresholding}.

\textit{Remark} 2: Given the wide popularity and efficiency of Lasso, our analysis not only provides a theoretical backing for its practical use, but also deepens our understanding of learning Ising models using Lasso. Previously, it has long been believed that the success of Lasso for Ising model selection (approximately) happens only when $\theta^{*}_{r,t}\to 0,\forall (r,t)\in E$ so that the square loss of Lasso is similar to the logistic loss of $\ell_1$-LogR \cite{lokhov2018optimal}. However, we identify and prove that Lasso actually behaves similarly as $\ell_1$-LogR and RISE in the whole paramagnetic phase (as opposed to the limit regime $\theta^{*}_{r,t}\to 0,\forall (r,t)\in E$). We hope that our study could inspire further research on alternative simple and efficient methods for Ising model selection.   

\subsection{Related  Works}
In \cite{bento2009graphical}, the authors pointed out a potential relevance of the incoherence condition of Lasso \cite{zhao2006model} to $\ell_1$-LogR by expanding the logistic loss around the true interactions $\theta^{*}$. However, on the one hand, it is restricted to the case when the $\ell_1$ regularization parameter approaches zero. On the other hand, the resultant quadratic loss is actually different from that of Lasso. Later, \cite{lokhov2018optimal} observed that at high temperatures when the magnitude of interactions approaches zero, i.e., $\theta^{*}_{r,t}\to 0,\forall (r,t)\in E$, both the logistic and interaction screening objective (ISO) losses can be approximated as a square loss using a second-order Taylor expansion around zero (as opposed to $\theta^{*}$ in \cite{zhao2006model}). However, their results are severely limited to the regime $\theta^{*}_{r,t}\to 0,\forall (r,t)\in E$. In other words, \cite{lokhov2018optimal} attributed the potential success of Lasso to its similarity with $\ell_{1}$-LogR/RISE in the regime $\theta^{*}_{r,t}\to 0,\forall (r,t)\in E$. Moreover, without considering the $\ell_1$ regularization term, \cite{lokhov2018optimal} only compared the analytical solution with that of the naive mean-field method \cite{tanaka1998mean,kappen1998efficient,ricci2012bethe}. A rigorous theoretical analysis of the consistency of Lasso  for Ising model selection is still lacking.


To the best of our knowledge, the first explicit analysis of Lasso for Ising model selection is given in  \cite{meng2021ising} using statistical physics methods, building on previous studies \cite{bachschmid2017statistical,Abbara2019c, meng2020structure}.  In particular, \cite{meng2021ising} demonstrated that Lasso has the same order of sample complexity as $\ell_{1}$-LogR for random regular (RR) graphs in the paramagnetic phase \cite{mezard2009information}. Furthermore, \cite{meng2021ising} provided an accurate estimate of the typical sample complexity as well as a precise prediction of the non-asymptotic learning performance. However, there are several limitations in \cite{meng2021ising}. First, since the replica method \cite{opper2001advanced, nishimori2001statistical, mezard2009information} they use is a non-rigorous method from statistical mechanics, a rigorous mathematical proof has remained lacking. Second, the results in \cite{meng2021ising} are  restricted to the special class of RR graphs. In addition, since their analysis relies on the {\em self averaging property} \cite{nishimori2001statistical,mezard2009information}, the results in \cite{meng2021ising} are meaningful in terms of the ``typical case'' \cite{engel2001statistical} rather than the worst case. 
Moreover, \cite{meng2021ising} did not analyze the case of Lasso with post-thresholding.  

Regarding the study of Lasso for nonlinear (not necessarily binary) targets, the past few years have seen an active line of research in the field of signal processing with a special focus on the single-index model \cite{brillinger1982generalized,plan2016generalized,thrampoulidis2015lasso,zhang2016consistency,genzel2016high}. These studies are related to ours but with several important differences. First, in our study, the covariates are generated from an Ising model rather than a Gaussian distribution. Second, we focus on model selection consistency of Lasso while most previous studies considered estimation consistency except \cite{zhang2016consistency}. However,  \cite{zhang2016consistency} only considered the classical asymptotic regime while we are interested in the high-dimensional setting where  $n \ll p$. Another closely related work is \cite{erdogdu2019scalable}, which studied the relationship between the true minimizer of the population risk of a generalized linear model and the ordinary least square coefficient. Nevertheless, they only focused on the classic $n \gg p$ case. Moreover,  even in the classic case,  \cite{erdogdu2019scalable} did not provide a rigorous analysis of the model selection consistency of Lasso with the empirical risk. 

\subsection{Notations}
For each vertex $r\in V$, the neighborhood set is denoted as $\mathcal{N}\left(r\right)\coloneqq\left\{ t\in V|\left(r,t\right)\in E\right\} $, the signed neighborhood set is defined as
$
\mathcal{N}_{\pm}\left(r\right)\coloneqq\left\{ \textrm{sign}\left(\theta_{rt}^{*}\right)t|t\in\mathcal{N}\left(r\right)\right\}$, and the corresponding node degree is denoted as $d_{r}\coloneqq\left|\mathcal{N}\left(r\right)\right|$.
The maximum node degree of the whole graph $G$ is denoted as $d\coloneqq\underset{r\in V}{\max}\;d_{r}$. We use $\mathcal{G}_{p,d}$
to denote the ensemble of graphs $G$ with $p$ vertices and maximum
(not necessarily bounded) node degree $d\geq3$. The minimum and maximum magnitudes of the interactions $\theta_{rt}^{*}$
for $\left(r,t\right)\in E$ are respectively denoted as
\begin{align}
\theta_{\min}^{*}\coloneqq  \underset{\left(r,t\right)\in E}{\min}\left|\theta_{rt}^{*}\right|,\;\;
\theta_{\max}^{*}\coloneqq  \underset{\left(r,t\right)\in E}{\max}\left|\theta_{rt}^{*}\right|.\label{eq:interactions-max}
\end{align}
$\mathbb{E}_{\theta^{*}}\left\{\cdot\right\}$ denotes expectation w.r.t. the joint distribution $\mathbb{P}_{\theta^{*}}\left(x\right)$  (\ref{eq:Ising_distribution}). $\interleave A \interleave_{\infty} = \max_{j}\sum_{k} \left|A_{jk}\right|$ is the $\ell_{\infty}$ matrix norm of a matrix $A$. $\Lambda_{\min}\left(A\right)$ and $\Lambda_{\max}\left(A\right)$ denote the minimum and maximum eigenvalue of $A$, respectively. 

\section{\label{sec:Background}Problem Setup}
The problem of Ising model selection can be generally described as follows:  given a collection of $n$ i.i.d. samples $\mathfrak{X}_n\coloneqq\left\{ x^{(1)},\ldots,x^{(n)}\right\}$ from an Ising model defined on a graph $G=\left(V,E\right)$, the goal is to reconstruct the graph structure of $G$. In this paper we focus on Ising models defined on general locally tree-like graphs, i.e., the local neighborhood of a uniformly random vertex of the graph converges in distribution to a random rooted tree \cite{dembo2010ising}. In particular, we also pay a special attention to the popular random regular (RR) graphs, one typical class of locally tree-like graphs with regular node degree $d_r = d$ and  uniform couplings  $\theta^{*}_{r,t} = \theta_0,\forall (r,t)\in E$.

As in \cite{ravikumar2010high}, we
consider the slightly stronger criterion of \textit{signed edge} recovery, and investigate the sufficient conditions on the \textit{sparsistency property}. 
\begin{definition}
(\textit{signed edge})
The signed edge set $E^{*}$ of one Ising model with interactions $\theta^{*}$ is defined as $E^{*}:=\left\{\textrm{sign}\left(\theta_{rt}^{*}\right)\right\}$
where $\textrm{sign}\left(\cdot\right)$ is an element-wise operation
that maps every positive entry to 1, negative entry to -1, and zero entry
to zero. 
\end{definition}

\begin{definition}
(\textit{sparsistency} property) Suppose that $\hat{E}_{n}$ is an estimator of the signed edge  $E^{*}$ given $\mathfrak{X}_n$, then it is called (signed) \textit{model selection consistent} in the sense that
\begin{equation}
\mathbb{P}\left(\hat{E}_{n}=E^{*}\right)\rightarrow1\textrm{ as }n\rightarrow+\infty,\label{eq:model-consistent}
\end{equation}
which is known as the \textit{sparsistency} property \cite{ravikumar2010high}.
\end{definition}
Our goal is to investigate the \textit{sparsistency} property of Lasso \cite{tibshirani1996regression} for high-dimensional Ising models on locally tree-like graphs. 
Since recovering the edge set $E^{*}$ of any graph $G=\left(V,E\right)$ is equivalent to reconstructing the associated signed neighborhood set $
\mathcal{N}_{\pm}\left(r\right)\coloneqq\left\{ \textrm{sign}\left(\theta_{rt}^{*}\right)t|t\in\mathcal{N}\left(r\right)\right\}$ for each vertex $r\in V$ \cite{ravikumar2010high},  one can equivalently investigate the scaling
condition on $\left(n,p,d\right)$ which ensures that the estimated
signed neighborhood $\mathcal{\hat{N}}_{\pm}\left(r\right)$ agrees
with the true neighborhood, i.e., $\big\{ \mathcal{\hat{N}}_{\pm}\left(r\right)=\mathcal{N}_{\pm}\left(r\right),\forall r\in V\big\} $,
with high probability.

Specifically, the estimate of the sub-vector $\theta_{\setminus r}^{*}\coloneqq\left\{ \theta_{rt}^{*}|t\in V\setminus r\right\} \in\mathbb{R}^{p-1},\;\forall r\in V$ is obtained via Lasso as follows
\begin{align}
\hat{\theta}_{\setminus r} & =\underset{\theta_{\setminus r}}{\arg\min}\left\{ \ell\left(\theta_{\setminus r};\mathfrak{X}_n\right)+\lambda_{\left(n,p,d\right)}\left\Vert \theta_{\setminus r}\right\Vert _{1}\right\}, \label{eq:lasso-estimator}
\end{align}
where $\ell\left(\theta_{\setminus r};\mathfrak{X}_n\right)$
denotes the square loss function
\begin{equation}
\ell\left(\theta_{\setminus r};\mathfrak{X}_n\right)\coloneqq\frac{1}{2n}\sum_{i=1}^{n}\big(x_{r}^{(i)}-\sum_{u\in V\setminus r}\theta_{ru}x_{u}^{(i)} \big)^{2},\label{eq:loss-def}
\end{equation}
and $\lambda_{\left(n,p,d\right)}>0$ is the regularization parameter. For simplicity, instead of $\lambda_{\left(n,p,d\right)}$, $\lambda_{n}$ will
be used hereafter.

Subsequently, one can obtain an estimate $\mathcal{\hat{N}}_{\pm}\left(r\right)$ of  $\mathcal{N}_{\pm}\left(r\right)$ from the Lasso results $\hat{\theta}_{\setminus r}$ in (\ref{eq:lasso-estimator}). Here we focus on two different settings:  \textit{without post-theresholding} and \textit{with post-thresholding}.  Without post-theresholding, one can simply estimate $\mathcal{\hat{N}}_{\pm}\left(r\right)$ using the sign information as \cite{ravikumar2010high}
\begin{equation}
\mathcal{\hat{N}}_{\pm}\left(r\right)\coloneqq\left\{ \textrm{sign}\left(\hat{\theta}_{rt}\right)t|t\in V\setminus r,\hat{\theta}_{rt}\neq0\right\}.\label{eq:Neighbor-est}
\end{equation}
Alternatively, one introduces a threshold $\xi>0$ and then perform post-thresholding on $\hat{\theta}_{\setminus r}$ \cite{ekeberg2013improved,decelle2014pseudolikelihood,lokhov2018optimal}, leading to
\begin{equation}
\mathcal{\hat{N}}_{\pm}\left(r\right)\coloneqq\left\{ \textrm{sign}\left(\hat{\theta}_{rt}\right)\mathtt{1}\left(\textrm{\ensuremath{\left|\hat{\theta}_{rt}\right|}}>\xi\right)t|t\in V\setminus r,\hat{\theta}_{rt}\neq0\right\}, \label{eq:Neighbor-est-thresh}
\end{equation}
where $\mathtt{1}\left(\cdot\right)$ is an indicator function that
equals to 1 if the event is true and 0 otherwise. 

\section{\label{sec: mainresults}Main results }
\subsection{Preliminary Results}
Before stating the main results, we first present two different results of Lasso compared with $\ell_1$-LogR regarding the expected first and second derivative of the loss function, i.e., $\mathbb{E}_{\theta^{*}}\{\nabla\ell\left(\theta_{\setminus r};\mathfrak{X}_{1}^{n}\right)\}$ and  $\mathbb{E}_{\theta^{*}}\{\nabla^2\ell\left(\theta_{\setminus r};\mathfrak{X}_{1}^{n}\right)\}$. 
\begin{lem}
\label{lemma-recaled-solution} For general tree-like graphs in the paramagnetic
phase, the solution to $\mathbb{E}_{\theta^{*}}\{\nabla\ell\left(\theta_{\setminus r};\mathfrak{X}_{1}^{n}\right)\}=0$,
denoted as $\tilde{\theta}_{\setminus r}^{*}=\left\{ \tilde{\theta}_{rt}^{*}\right\} _{t\in V\setminus r}\in\mathbb{R}^{p-1}$,
can be obtained as
\begin{equation}
\tilde{\theta}_{rt}^{*}=\begin{cases}
\frac{\tanh\left(\theta_{rt}^{*}\right)/\left({1-\tanh^{2}\left(\theta_{rt}^{*}\right)}\right)}{1-d_{r}+\sum_{u\in\mathcal{N}\left(r\right)}\frac{1}{1-\tanh^{2}\left(\theta_{ru}^{*}\right)}} & \textrm{if}\left(r,t\right)\in E\\
0 & \textrm{otherwise.}
\end{cases}\label{eq:rescaled-result}
\end{equation}
where $d_{r}$ is the node degree of $r$. In particular, for $RR$ graph with uniform coupling strength $\theta_{rt}^{*}=\theta_{0},\forall\left(r,t\right)\in E$
and constant  node degree $d_{r}=d$, there is
\begin{equation}
\tilde{\theta}_{rt}^{*}=\begin{cases}
\frac{\tanh\left(\theta_{0}\right)}{1+\left(d-1\right)\tanh^{2}\left(\theta_{0}\right)} & \textrm{if}\left(r,t\right)\in E\\
0 & \textrm{otherwise.}
\end{cases}\label{eq:rescaled-result-RR}
\end{equation}
\end{lem}
\begin{proof}
See Appendix \ref{appendix:lemma-recaled-solution}. 
\end{proof}
Lemma \ref{lemma-recaled-solution} indicates that, the solution $\tilde{\theta}_{\setminus r}^{*}$ is a rescaled value of the true parameter $\theta_{\setminus r}^{*}$ and thus shares the same sign structure, i.e., $\textrm{sign}\left(\tilde{\theta}_{\setminus r}^{*}\right)=\textrm{sign}\left(\theta_{\setminus r}^{*}\right)$.  The minimum magnitude of $\tilde{\theta}_{rt}^{*}$ for $\left(r,t\right)\in E$ in (\ref{eq:rescaled-result}) is denoted as
\begin{equation}
\tilde{\theta}_{\min}^{*}\coloneqq\underset{\left(r,t\right)\in E}{\min}\tilde{\theta}_{rt}^{*}.  \label{eq:min-rescale-def}
\end{equation}
For the second derivative or Hessian matrix, in the case of Lasso,  it corresponds exactly to the covariance matrix, i.e., 
\begin{equation}
Q_{r}^{*}\coloneqq\mathbb{E}_{\theta^{*}}\{\nabla^2\ell\left(\theta_{\setminus r};\mathfrak{X}_{1}^{n}\right)\} =\mathbb{E}_{\theta^{*}}\{ X_{\setminus r}X_{\setminus r}^{T}\}, \forall r\in V .\label{eq:Hessian-Qr}
\end{equation}
As opposed to \cite{ravikumar2010high}, the additional variance function term of  $\ell_1$-LogR (eq. (12) in  \cite{ravikumar2010high}, denoted as $\eta(X;\theta^*)$) does not exist in $Q_{r}^{*}$ ( \ref{eq:Hessian-Qr}), which makes Lasso different from $\ell_1$-LogR, including its behavior and the corresponding proof. For notational simplicity, $Q_{r}^{*}$ will be written as $Q^{*}$
hereafter. Denote $S\coloneqq\left\{ \left(r,t\right)\mid t\in\mathcal{N}\left(r\right)\right\} $
as the subset of indices associated with edges of $r$ and $S^{c}$
as its complement. The $d_r\times d_r$ sub-matrix of $Q^{*}$ indexed
by $S$ is denoted as $Q_{SS}^{*}$. Other sub-matrices like  $Q^*_{S^c S}$ are defined in the same way.
\subsection{Lasso without Post-thresholding}
For Lasso without post-thresholding, i.e., the signed edge set $\mathcal{\hat{N}}_{\pm}\left(r\right),\forall r\in V$ is obtained as (\ref{eq:Neighbor-est}), we have  
\begin{theorem}
\label{theorem-RR-Lasso-no-post} (RR graphs) Consider a collection of $n$ i.i.d. samples $\mathfrak{X}_n\coloneqq\left\{ x^{(1)},\ldots,x^{(n)}\right\}$ drawn from an Ising model on a RR graph $G=\left(V,E\right)\in\mathcal{G}_{p,d}$ with regular  node degree $d$ and uniform couplings $\theta^{*}_{r,t} = \theta_0,\forall (r,t)\in E$. Suppose that the Ising model is in the paramagnetic phase, i.e., $(d-1)\tanh{\theta_0}<1$, then there exist constants $L,c$ independent of $\left(n,p,d\right)$, so that the Lasso estimator (\ref{eq:lasso-estimator}) with the regularization parameter $\lambda_n\leq \frac{\tanh\left(\theta_{0}\right)(1-\tanh^2{(\theta_0)})}{6\sqrt{d}\left(1+\left(d-1\right)\tanh^{2}\left(\theta_{0}\right)\right)}$ reconstructs the signed edge set by (\ref{eq:Neighbor-est}) perfectly with probability at least 
\begin{align}
    \mathbb{P}\left(\hat{E}_{n}=E^{*}\right) \geq 1-2\exp\left(-c\lambda_n^2n\right)
\end{align}
as long as $n \geq \max\left\{Ld^3, \frac{64(1+\tanh{(\theta_0)})^2}{(1-\tanh{(\theta_0)})^2\lambda_n^2}\right\}\log p$. 
\end{theorem}

\textit{Remark} 3: Theorem \ref{theorem-RR-Lasso-no-post} indicates that the probability that the Lasso estimator (\ref{eq:lasso-estimator}) successfully recovers the true signed edge set decays exponentially as a function of $\lambda_n^2 n$, which is the same as $\ell_1$-LogR \cite{ravikumar2010high}.  If $\lambda_n$ is chosen such that $\lambda_n^2 n\rightarrow \infty$ as $n\to \infty$, Lasso is model selection consistent, i.e., $ \mathbb{P}\left(\hat{E}_{n}=E^{*}\right)\to 1$ as $n\to\infty$. 
In the high-dimensional case, one reasonable choice of $\lambda_n$ that satisfies both Theorem \ref{theorem-RR-Lasso-no-post} and $\lambda_n^2 n\rightarrow \infty$ is $\lambda_n=\kappa\sqrt{\frac{\log p}{n}}$, where $\kappa \geq \frac{8(1+\tanh{\theta_0})}{1-\tanh{\theta_0}}$. In this case, i.e., $\lambda_n=\kappa\sqrt{\frac{\log p}{n}}$, from Theorem \ref{theorem-RR-Lasso-no-post}, it is obtained that the number of samples required for model selection consistency needs to satisfy $n \geq \max\left\{Ld^3, \frac{36\kappa^2(1+(d-1)\tanh{(\theta_0)})^2}{\tanh^2{(\theta_0)})(1-\tanh^2{(\theta_0)})^2}d\right\}\log p$.  {In practical applications, one might not be able to obtain the ideal  $\kappa$ when $\theta_0$ is unknown. However, a larger   $\kappa$ can be chosen, which is possible either by using a prior knowledge of the range of $\theta_0$ for a specific problem, or by setting $\kappa$ as large as possible. A disadvantage of larger $\kappa$ is that, the number of samples $n$  becomes larger since $n \geq \max\left\{Ld^3, \frac{36\kappa^2(1+(d-1)\tanh{(\theta_0)})^2}{\tanh^2{(\theta_0)})(1-\tanh^2{(\theta_0)})^2}d\right\}\log p$. However, this is an inevitable price we have to pay due to  the lack of knowledge of the models. }

{Note that while the uniform coupling of RR graphs in Theorem \ref{theorem-RR-Lasso-no-post} is a limitation,  Theorem 1 holds  without additional assumptions as $\ell_1$-LogR \cite{ravikumar2010high}.}  For general locally tree-like graphs, under additional mild assumptions similar to $\ell_1$-LogR \cite{ravikumar2010high}, namely the \textsl{dependency condition} and \textsl{incoherence condition},  we can still obtain similar results as RR graphs in Theorem \ref{theorem-RR-Lasso-no-post}. 

\textbf{Condition 1} (C1): \textit{dependency condition}. The sub-matrix $Q_{SS}^{*}$ has bounded eigenvalue, i.e., there exists a constant $C_{\min}>0$
such that
\begin{align}
\Lambda_{\min}\left(Q_{SS}^{*}\right) & \geq C_{\min}. \label{eq:eigen-upper-lower}
\end{align}
\textbf{Condition 2} (C2): \textit{incoherence condition}. 
There exists an $\alpha\in(0,1]$ such that
\begin{equation}
\interleave Q_{S^{c}S}^{*}\left(Q_{SS}^{*}\right)^{-1}\interleave_{\infty}\leq1-\alpha. \label{eq:incoherence-cond}
\end{equation}

\begin{theorem}
\label{theorem-main-tree-graph-noPost} (tree-like graphs) Consider general tree-like graphs $G=\left(V,E\right)\in\mathcal{G}_{p,d}$ in the paramagnetic phase. Suppose that conditions (C1) and (C2) are satisfied by the population covariance matrix $Q^*$. If the regularization parameter $\lambda_{n}$ is selected to satisfy $\lambda_{n}\geq\frac{4\sqrt{c+1}\left(2-\alpha\right)}{\alpha}\sqrt{\frac{\log p}{n}}$
for some constant $c>0$, then there exists a constant $L$ independent
of $\left(n,p,d\right)$ such that if 
\begin{equation}
n\geq Ld^{3}\log p,\label{eq:n-condition-population}
\end{equation} then with probability at least $1-2\exp\left(-c\log p\right)\rightarrow1$ as $p\rightarrow \infty$, the following properties hold:

(a) For each node $r\in V$, the Lasso estimator (\ref{eq:lasso-estimator}) has a unique solution, and thus uniquely specifies a signed neighborhood  $\mathcal{\hat{N}}_{\pm}\left(r\right)$ with (\ref{eq:Neighbor-est}).

(b) For each node $r\in V$, the estimated signed neighborhood vector $\mathcal{\hat{N}}_{\pm}\left(r\right)$ with (\ref{eq:Neighbor-est}) correctly excludes all edges not in the true neighborhood. Moreover, it correctly includes all edges if the minimum magnitude of the rescaled parameter satisfies  $\tilde{\theta}_{\min}^{*}\geq\frac{6\lambda_{n}\sqrt{d}}{C_{\min}}$.
\end{theorem}

\textit{Remark} 4: Theorem \ref{theorem-main-tree-graph-noPost} indicates that the probability that Lasso recovers the true signed edge set $\mathbb{P}\left(\hat{E}_{n}=E^{*}\right)\to 1$ exponentially as a function of $\log p$. Hence, for tree-like Ising models in the paramagnetic phase, under conditions (C1) and (C2), in the high-dimensional setting (for $p\to \infty$), Lasso is model selection consistent with $n=\Omega{(d^3\log{p})}$ samples, which is the same as $\ell_1$-LogR \cite{ravikumar2010high}.  

In contrast to  RR graphs in Theorem \ref{theorem-RR-Lasso-no-post}, for general tree-like graphs, two additional assumptions (C1) and (C2) are imposed for the success of Lasso without post-thresholding. However, it is worth noting that  $\ell_1$-LogR without post-thresholding also suffers from the same limitation as shown in \cite{ravikumar2010high}, which is due to the fundamental difficulty in verifying (C1) and (C2) for general graphs.

\subsection{Lasso with Post-thresholding}
For Lasso with post-thresholding, i.e., the signed neighborhood set $\mathcal{\hat{N}}_{\pm}\left(r\right),\forall r\in V$ is obtained as (\ref{eq:Neighbor-est-thresh}), we obtain the following results. 
\begin{theorem}
\label{theorem-square-error} (Square error, tree-like graphs) Consider an
Ising model defined on tree-like graphs $G=\left(V,E\right)\in\mathcal{G}_{p,d}$. $\forall r\in V$
and for any $\varepsilon_{1}>0$, in the paramagnetic phase, the square error of the Lasso estimator
(\ref{eq:lasso-estimator}) with regularization parameter $\lambda_{n}=4\sqrt{\frac{\log\frac{3p}{\varepsilon_{1}}}{n}}$
is bounded with probability at least $1-\varepsilon_{1}$ by
\begin{equation}
\left\Vert \hat{\theta}_{\setminus r}-\tilde{\theta}_{\setminus r}^{*}\right\Vert _{2}\leq2^{6}\sqrt{d}\left(d+1\right)e^{2\theta_{\max}^{*}d}\sqrt{\frac{\log\frac{3p}{\varepsilon_{1}}}{n}}\label{eq:square-error-bound-1}
\end{equation}
when $n\geq2^{14}d^{2}\left(d+1\right)^{2}e^{4\theta_{\max}^{*}d}\log\frac{3p^{2}}{\varepsilon_{1}}$. \label{eq:square-error-bound}
\end{theorem}

\begin{theorem}
\label{theorem-Post-Thresholding} (Structure learning, tree-like graphs) Consider
an Ising model defined on tree-like graphs $G=\left(V,E\right)\in\mathcal{G}_{p,d}$. In the paramagnetic phase, for any $\varepsilon_2>0$,
the Lasso estimator (\ref{eq:lasso-estimator}) with regularization
parameter $\lambda_{n}=4\sqrt{\frac{\log\frac{3p^{2}}{\varepsilon_2}}{n}}$
reconstructs the sign edge set by (\ref{eq:Neighbor-est-thresh}) perfectly with probability
\begin{equation}
\mathbb{P}\left(\hat{E}=E^{*}\right)\geq1-\varepsilon_2,\label{eq:square-error-bound-1-1}
\end{equation}
as long as 
\begin{equation}
n\geq\max\left\{ d,\left(\tilde{\theta}_{\min}^{*}\right)^{-2}\right\} 2^{14}d\left(d+1\right)^{2}e^{4\theta_{\max}^{*}d}\log\frac{3p^{3}}{\varepsilon_2}. \label{eq:Lasso-post-sample-bound}
\end{equation}
\end{theorem}

\textit{Remark} 5: Results in Theorems \ref{theorem-square-error} and  \ref{theorem-Post-Thresholding} hold for general tree-like graphs without any further assumptions of (C1) and (C2). In particular, Theorem \ref{theorem-Post-Thresholding} indicates that Lasso with post-thresholding is model selection consistent under similar conditions as the RISE \cite{vuffray2016interaction} and $\ell_1$-LogR with post-thresholding \cite{lokhov2018optimal}. Note that similarly as RISE and $\ell_1$-LogR \cite{vuffray2016interaction,lokhov2018optimal}, the obtained bound in (\ref{eq:Lasso-post-sample-bound}) is a rather loose bound, especially in the paramagnetic phase, e.g., while it suggests an exponential growth w.r.t. $\theta_{\max}^{*}$, it is in fact not the case in the paramagnetic phase (see Figure 4 in \cite{lokhov2018optimal}). 

{\textit{Remark} 6: In Theorems \ref{theorem-square-error} and  \ref{theorem-Post-Thresholding}, the regularization parameter $\lambda_{n}$ is chosen  as a function of the controlled probability values $\epsilon_1$ or $\epsilon_2$. In practical applications when no prior knowledge is available, similarly as \cite{vuffray2016interaction,lokhov2018optimal}, we can simply choose $\lambda_{n}$ as
$\lambda_{n}=4\sqrt{\frac{\log\frac{3p}{\epsilon_{1}}}{n}}$ or $\lambda_{n}=4\sqrt{\frac{\log\frac{3p^{2}}{\epsilon_2}}{n}}$, respectively. Moreover, as described in \cite{lokhov2018optimal}, given a sufficient number of samples, other techniques such as consistency cross-validation can be used for selecting the optimal value of $\lambda_{n}$ on a case-by-case basis. For more details, please refer to the supplementary material of \cite{lokhov2018optimal}. }

\section{Proof of the main results}
Here we provide a sketch of the proofs for the main results. For details, please refer to Appendices \ref{appendix-main-proof-no-thd} and \ref{appendix-main-proof-with-thd}. 
\subsection{Sketch of the proof for Theorems \ref{theorem-RR-Lasso-no-post} and \ref{theorem-main-tree-graph-noPost}}
For the proof of Lasso without post-thresholding, we use the primal-dual witness proof framework \cite{ravikumar2010high}, which was originally proposed in \cite{Wainwright2009sharp}. The main idea of the primal-dual witness
method is to explicitly construct an optimal primal-dual pair which
satisfies the sub-gradient optimality conditions associated with the Lasso estimator (\ref{eq:lasso-estimator}). Subsequently, it is proved
that under the stated assumptions on $\left(n,p,d\right)$, the optimal
primal-dual pair can be constructed such that they act as a witness,
i.e., a certificate that guarantees that the neighborhood-based Lasso estimator  (\ref{eq:lasso-estimator}) together with (\ref{eq:Neighbor-est}) correctly
recovers the signed edge set of the graph $G\in\mathcal{G}_{p,d}$.


Generally speaking, the proof of Theorems \ref{theorem-RR-Lasso-no-post} and \ref{theorem-main-tree-graph-noPost}  consists of two stages. At the first stage, we  consider a ``fixed design'' case assuming that the sample Hessian $Q^{n}\coloneqq\frac{1}{n}\sum_{i=1}^{n}x_{\setminus r}^{\left(i\right)}\left(x_{\setminus r}^{\left(i\right)}\right)^{T}$, satisfies both conditions (C1) and (C2). Afterwards, at the second stage, using some large-deviation analysis we provide guarantees under which both conditions (C1) and (C2) hold for the
sample Hessian $Q^{n}$ with high probability. Finally, we obtain Theorems \ref{theorem-RR-Lasso-no-post} and \ref{theorem-main-tree-graph-noPost} combining results of the two stages. Notably, for RR graphs, there is one remarkable  property, as shown in Lemma \ref{lem:RR-graph-(C1)(C2)}: 
\begin{lem}
\label{lem:RR-graph-(C1)(C2)}
For Ising models defined on RR graphs $G=\left(V,E\right)\in\mathcal{G}_{p,d}$ with regular  node degree $d$ and uniform couplings $\theta^{*}_{r,t} = \theta_0,\forall (r,t)\in E$. 
In the paramagnetic phase, both conditions (C1) and (C2) hold for $Q^{*}$, where $C_{\min} = 1 - \tanh^{2}\theta_{0}$ and $\alpha = 1 - \tanh\theta_{0}$. 
\end{lem}
\begin{proof}
See Appendix \ref{appendix:RR-C1C2-proof}. 
\end{proof}
As a result, in Theorem  \ref{theorem-RR-Lasso-no-post}, there is no need for assumptions (C1) and (C2) in the case of RR graphs.

The important results at the first stage are shown in Proposition \ref{theorem-main-fixed} and Proposition \ref{theorem-main-fixed-tree}, which correspond to the RR graphs and general tree-like graphs, respectively. 
\begin{prop} 
\label{theorem-main-fixed} (fixed design, RR graphs) Consider an Ising model on a RR graph $G=\left(V,E\right)\in\mathcal{G}_{p,d}$ with regular  node degree $d$ and uniform couplings $\theta^{*}_{r,t} = \theta_0,\forall (r,t)\in E$. Suppose that the Ising model is in the paramagnetic phase, and that the sample Hessian $Q^{n}$ satisfies (C1) and (C2). If the regularization parameter $\lambda_{n}$ 
satisfies $\lambda_n\geq \frac{8(2-\alpha)}{\alpha}\sqrt{\frac{\log p}{n}}$, then with probability at least $1-2\exp\left(-c\lambda_n^2n\right)\rightarrow1$, the following properties hold: 

(a) For each node $r\in V$, the Lasso estimator (\ref{eq:lasso-estimator}) has a unique solution, and thus uniquely specifies a signed neighborhood  $\mathcal{\hat{N}}_{\pm}\left(r\right)$.

(b) For each node $r\in V$, the estimated signed neighborhood vector $\mathcal{\hat{N}}_{\pm}\left(r\right)$ using the Lasso estimator (\ref{eq:lasso-estimator}) correctly excludes all edges not in the true neighborhood. Moreover, it correctly includes all edges if 
$\tilde{\theta}_{\min}^{*}\geq\frac{6\lambda_{n}\sqrt{d}}{C_{\min}}$. 
\end{prop}
\begin{proof}
See Appendix \ref{appendix:proof-proposition-RR}.
\end{proof}

\begin{prop}
\label{theorem-main-fixed-tree} (fixed design, tree-like graphs) Consider an Ising model defined on a tree-like graph $G=\left(V,E\right)\in\mathcal{G}_{p,d}$
with parameter vector $\theta^{*}$ and associated signed edge set $E^{*}$. Suppose that the Ising model is in the paramagnetic phase, and the sample Hessian $Q^{n}$  satisfies (C1) and (C2) and the regularization parameter $\lambda_{n}$ 
satisfies $\lambda_{n}\geq\frac{4\sqrt{c+1}\left(2-\alpha\right)}{\alpha}\sqrt{\frac{\log p}{n}}$
for some constant $c>0$. Under these conditions, if 
\begin{equation}
n\geq (c+1)d^{2}\log p,
\end{equation} then with probability at least $1-2\exp\left(-c\log p\right)\rightarrow1$ as $p\rightarrow \infty$, the following properties hold:

(a) For each node $r\in V$, the Lasso estimator (\ref{eq:lasso-estimator}) has a unique solution, and thus uniquely specifies a signed neighborhood  $\mathcal{\hat{N}}_{\pm}\left(r\right)$.

(b) For each node $r\in V$, the estimated signed neighborhood vector $\mathcal{\hat{N}}_{\pm}\left(r\right)$ correctly excludes all edges not in the true neighborhood. Moreover, it correctly includes all edges if 
$\tilde{\theta}_{\min}^{*}\geq\frac{6\lambda_{n}\sqrt{d}}{C_{\min}}$, where $\tilde{\theta}_{\min}^{*}$ is the minimum magnitude of the rescaled parameter $\tilde{\theta}^*$ defined in (\ref{eq:rescaled-result}). 
\end{prop}

\begin{proof}
See Appendix \ref{appendix:prop-tree-proof}.
\end{proof}

Note that in the above  two Propositions of the``fixed design'' case, in contrast to the ``fixed design'' results of $\ell_1$-LogR in \cite{ravikumar2010high}, there is no requirement of an additional scaling condition of $n\geq Ld^2\log p$. This is due to the fundamental difference between the square loss of Lasso and the logistic loss of $\ell_1$-LogR. Specifically for $\ell_1$-LogR, $n\geq Ld^2\log p$ is needed to ensure the $\ell_{2}$-consistency of the primal
sub-vector and to bound the remainder term, while it is not the case for Lasso with square loss, as shown in Lemma \ref{lem:L2-consistency}. However, this only holds under the assumption that the sample Hessian satisfies conditions (C1) and (C2). To ensure that these conditions are satisfied by the sample Hessian, an additional requirement of $n\geq Ld^3\log p$ is still needed, as shown in the final results in  Theorem  \ref{theorem-RR-Lasso-no-post} and Theorem \ref{theorem-main-tree-graph-noPost}. 

\textbf{Some key results}: The key results for the proofs of Lasso without post-thresholding are given as follows.
\begin{lem} 
\label{lem: Ws-Zs-results}
Denote $W^{n}=-\nabla\ell\left(\tilde{\theta}_{\setminus r}^{*};\mathfrak{X}_{1}^{n}\right)$. The $s$-th  element of $W^{n}$, denoted as   $W_{s}^{n}$, can be written as follows
\begin{align}
W_{s}^{n} & =\frac{1}{n}\sum_{i=1}^{n}Z_{s}^{\left(i\right)},\;\forall s\in V\setminus r,\label{eq:W-n-def}\\
Z_{s}^{\left(i\right)} &\coloneqq x_{s}^{(i)}(x_{r}^{(i)}-\sum_{t\in V\setminus r}\tilde{\theta}_{rt}^{*}x_{t}^{(i)}) \label{eq:Zs_def}. 
\end{align}
Then, $\mathbb{E}_{\theta^{*}}\left(Z_{s}^{\left(i\right)}\right)=0$,
$\mathtt{Var}\left(Z_{s}^{\left(i\right)}\right)\leq1$. Furthermore:

(a) For RR graphs, there is $\left|Z_{s}^{\left(i\right)}\right|\leq 2$;

(b) For general tree-like graphs, there is $\left|Z_{s}^{\left(i\right)}\right|\leq d$.
\end{lem}
\begin{proof}
See Appendix \ref{appendix:lemma-Z_s_bounded_zeromean_var}. 
\end{proof}
The behavior of $\left\Vert W^{n}\right\Vert _{\infty}$ is shown in Lemma \ref{lem:W_inf_norm}.
\begin{lem}
\label{lem:W_inf_norm}
\label{lem:W_inf}
Regarding $W^{n}=-\nabla\ell\left(\tilde{\theta}_{\setminus r}^{*};\mathfrak{X}_n\right)$ in Lemma \ref{lem: Ws-Zs-results}:
(a) For RR graphs, if $\lambda_n\geq \frac{8(2-\alpha)}{\alpha}\sqrt{\frac{\log p}{n}}$, then 
\begin{equation}
\mathbb{P}\Big(\frac{2-\alpha}{\lambda_{n}}\left\Vert W^{n}\right\Vert _{\infty}\geq\frac{\alpha}{2}\Big)  \leq 2\exp\Big(-\frac{\alpha^{2}\lambda_n^2 n}{32(2-\alpha)^2} + \log p\Big), \label{eq:W-inf-norm-RR}
\end{equation}
(b) For general tree-like graphs, if $n\geq\left(c+1\right)d^{2}\log p$ for some constant $c>0$ and
${\lambda_{n}\geq\frac{4\sqrt{c+1}\left(2-\alpha\right)}{\alpha}\sqrt{\frac{\log p}{n}}}$,
then 
\begin{equation}
\mathbb{P}\Big(\frac{2-\alpha}{\lambda_{n}}\left\Vert W^{n}\right\Vert _{\infty}\geq\frac{\alpha}{2}\Big)\leq2\exp\left(-c\log p\right).\label{eq:W-inf-norm-tree-like}
\end{equation}
\end{lem}
\begin{proof}
See Appendix \ref{appendix:lemma-W_inf_norm}.
\end{proof}
\begin{lem}
\label{lem:L2-consistency} If $\left\Vert W^{n}\right\Vert _{\infty}\leq\frac{\lambda_{n}}{2}$, then there is 
\begin{equation}
\left\Vert \hat{\theta}_{S}-\tilde{\theta}_{S}^{*}\right\Vert _{2}\leq\frac{3}{C_{\min}}\lambda_{n}\sqrt{d}.\label{eq:l2-consist}
\end{equation}
\end{lem}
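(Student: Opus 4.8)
The plan is to run the standard convexity/localization argument of \citet{ravikumar2010high} (originally from \citet{Wainwright2009sharp}), exploiting that the square loss (\ref{eq:loss-def}) is \emph{exactly} quadratic so that, unlike in the $\ell_1$-LogR case, there is no Taylor remainder to control. First I would introduce the shifted objective restricted to the support,
\begin{equation}
G(u_S)\coloneqq \ell\big((\tilde\theta_S^*+u_S,0);\mathfrak{X}_1^n\big)-\ell\big((\tilde\theta_S^*,0);\mathfrak{X}_1^n\big)+\lambda_n\big(\|\tilde\theta_S^*+u_S\|_1-\|\tilde\theta_S^*\|_1\big),
\end{equation}
a convex function of $u_S\in\mathbb{R}^{d_r}$ with $G(0)=0$ whose minimizer is $\hat u_S\coloneqq\hat\theta_S-\tilde\theta_S^*$, with $\hat\theta_S$ the partial minimizer (\ref{eq:lasso-estimator-partial}); hence $G(\hat u_S)\le 0$. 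The crux is the elementary convexity fact: if $G>0$ everywhere on the sphere $\{\|u_S\|_2=B\}$, then necessarily $\|\hat u_S\|_2<B$, since otherwise the point of that sphere lying on the segment $[0,\hat u_S]$ would, by convexity and $G(0)=0$, satisfy $G\le 0$. So it suffices to produce a radius $B$ on which $G$ is strictly positive.

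Next I would lower-bound $G$ on the sphere. Because $\nabla^2\ell\equiv Q^n$ is constant, the loss difference equals \emph{exactly} $(\nabla_S\ell(\tilde\theta_S^*))^T u_S+\tfrac12 u_S^T Q^n_{SS}u_S$, and $\nabla_S\ell(\tilde\theta_{\setminus r}^*;\mathfrak{X}_1^n)=-W_S^n$ by the definition of $W^n$ in (\ref{eq:zero-subgradient-form2}). Combining this with $\|\tilde\theta_S^*+u_S\|_1-\|\tilde\theta_S^*\|_1\ge-\|u_S\|_1$, the bound $|(W_S^n)^Tu_S|\le\|W^n\|_\infty\|u_S\|_1$, the hypothesis $\|W^n\|_\infty\le\lambda_n/2$, the sparsity inequality $\|u_S\|_1\le\sqrt d\,\|u_S\|_2$ (as $|S|=d_r\le d$), and the lower eigenvalue bound $\Lambda_{\min}(Q^n_{SS})\ge C_{\min}$ — which is precisely condition (C1) imposed on the sample covariance in the fixed-design Proposition \ref{theorem-main-fixed} — gives, for $\|u_S\|_2=B$,
\begin{equation}
G(u_S)\ \ge\ \tfrac{C_{\min}}{2}B^2-\Big(\|W^n\|_\infty+\lambda_n\Big)\sqrt d\,B\ \ge\ B\Big(\tfrac{C_{\min}}{2}B-\tfrac{3\lambda_n\sqrt d}{2}\Big),
\end{equation}
which is strictly positive once $B>\tfrac{3\lambda_n\sqrt d}{C_{\min}}$. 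Letting $B\downarrow\tfrac{3\lambda_n\sqrt d}{C_{\min}}$ then yields $\|\hat\theta_S-\tilde\theta_S^*\|_2\le\tfrac{3}{C_{\min}}\lambda_n\sqrt d$, which is (\ref{eq:l2-consist}).

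The argument is largely routine and I do not expect a genuine obstacle. The two points needing care are: (i) the convexity ``follow the segment to the sphere'' step, which requires $G>0$ on the \emph{whole} sphere and uses both $G(0)=0$ and the optimality of $\hat u_S$; and (ii) making sure (C1) is invoked for the sample Hessian $Q^n_{SS}$, which is legitimate precisely because Proposition \ref{theorem-main-fixed} hypothesizes (C1) for $Q^n$. The absence of any remainder term — since $\ell$ is globally quadratic — is exactly what produces the clean constant $3/C_{\min}$ in place of the larger constant in \citet{ravikumar2010high}, and is the sense in which the square loss ``simplifies the analysis'' at this step.
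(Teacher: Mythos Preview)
Your proposal is correct and follows essentially the same localization argument as the paper: define the shifted convex function $G$, use the ``sphere'' convexity trick to reduce to showing $G>0$ on $\{\|u_S\|_2=B\}$, expand the (exactly quadratic) loss, and bound the three terms using $\|W^n\|_\infty\le\lambda_n/2$, $\|u_S\|_1\le\sqrt d\,\|u_S\|_2$, and $\Lambda_{\min}(Q^n_{SS})\ge C_{\min}$. The only cosmetic difference is that you retain the factor $\tfrac12$ in the Hessian term and then pass to the limit $B\downarrow 3\lambda_n\sqrt d/C_{\min}$, whereas the paper sets $B=M\lambda_n\sqrt d$ with $M=3/C_{\min}$ directly; both yield the same bound (\ref{eq:l2-consist}).
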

\begin{proof}
See Appendix \ref{appendix:lem:L2-consistency}.
\end{proof}

\subsection{Sketch of the proof for Theorems \ref{theorem-square-error} and  \ref{theorem-Post-Thresholding}}
In proving Theorems \ref{theorem-square-error} and \ref{theorem-Post-Thresholding}, we resort to the restricted strong convexity framework in \cite{negahban2012unified}. 

First, consider the proof of Theorem \ref{theorem-square-error} which provides an estimation error bound (\ref{eq:square-error-bound-1}) of Lasso. Similarly as RISE and $\ell_1$-LogR \cite{vuffray2016interaction,negahban2012unified,lokhov2018optimal}, to obtain a handle on the (rescaled) square error of Lasso, two sufficient conditions (C3) and (C4) are enforced as follows: 

\textbf{Condition 3} (C3): The $\ell_{1}$ regularization parameter
$\lambda_{n}$ strongly enforces regularization if it is greater than
any partial derivatives of the loss function $\ell\left(\theta_{\setminus r};\mathfrak{X}_{1}^{n}\right)$
evaluated at $\tilde{\theta}_{\setminus r}^{*}$ defined in (\ref{eq:rescaled-result}), i.e., 
\begin{equation}
\lambda_{n}\geq2\left\Vert \nabla\ell\left(\tilde{\theta}_{\setminus r}^{*};\mathfrak{X}_{1}^{n}\right)\right\Vert _{\infty}.\label{eq:lambda-condion}
\end{equation}
Condition (C3) guarantees that if the vector of the rescaled couplings $\tilde{\theta}_{\setminus r}^{*}$
has at most $d$ non-zero elements, then the estimation difference
$\hat{\theta}_{\setminus r}-\tilde{\theta}_{\setminus r}^{*}$ lies
within the set

\begin{equation}
K\coloneqq\left\{ \triangle\in\mathbb{R}^{p-1}\mid\left\Vert \triangle\right\Vert _{1}\leq4\sqrt{d}\left\Vert \triangle\right\Vert _{2}\right\}. \label{eq:K-set-def}
\end{equation}

\textbf{Condition 4} (C4): The square loss of Lasso is restricted strongly
convex w.r.t. set $K$ (\ref{eq:K-set-def}) on a ball of radius $R$ centered at $\theta_{\setminus r}=\tilde{\theta}_{\setminus r}^{*}$
if for all $\triangle_{\theta_{\setminus r}}\in K$ such that $\left\Vert \triangle_{\theta_{\setminus r}}\right\Vert _{2}\leq R$,
there exists a constant $\kappa>0$ such that the remainder of the first-order Taylor expansion of the loss
function satisfies
\begin{equation}
\delta\ell\left(\triangle_{\theta_{\setminus r}},\tilde{\theta}_{\setminus r}^{*};\mathfrak{X}_{1}^{n}\right)\geq\kappa\left\Vert \triangle_{\theta_{\setminus r}}\right\Vert _{2}^{2}.\label{eq:lambda-condion-1}
\end{equation}
where $\triangle_{\theta_{\setminus r}}\in\mathbb{R}^{p-1}$ is an
arbitrary vector and the remainder can be calculated as 
\begin{equation}
\delta\ell\left(\triangle_{\theta_{\setminus r}},\tilde{\theta}_{\setminus r}^{*};\mathfrak{X}_{1}^{n}\right)=\frac{1}{2}\triangle_{\theta_{\setminus r}}^{T}Q^{n}\triangle_{\theta_{\setminus r}}.\label{eq:delta-loss-def}
\end{equation}
 
The key point is that, the estimation error $\big\Vert \hat{\theta}_{\setminus r}-\tilde{\theta}_{\setminus r}^{*}\big\Vert _{2}$ of Lasso can be controlled if conditions (C3) and (C4) are satisfied, as shown in  Proposition \ref{prop:square-error}: 
\begin{prop}
\label{prop:square-error} (Theorem 1, \cite{negahban2012unified}) If the Lasso estimator (\ref{eq:lasso-estimator})
satisfies both (C3) and (C4) with $R\geq3\sqrt{d}\frac{\lambda_{n}}{\kappa}$,
then the square error is bounded by 
\begin{equation}
\left\Vert \hat{\theta}_{\setminus r}-\tilde{\theta}_{\setminus r}^{*}\right\Vert _{2}\leq3\sqrt{d}\frac{\lambda_{n}}{\kappa}.\label{eq:square-error-bound}
\end{equation}
\end{prop}
As a result, the proof of Theorem \ref{theorem-square-error}
is done through Proposition \ref{prop:square-error}
by evaluating the two conditions (C3) and (C4). 

Regarding the proof of Theorem \ref{theorem-Post-Thresholding}, it is simply an application of Theorem \ref{theorem-square-error} by choosing a specific value of the estimation error. Specifically, with the definition of $\tilde{\theta}_{\min}^{*}$ in (\ref{eq:rescaled-result}) as the minimum rescaled coupling for a general graph, suppose that the estimated error $\big\Vert \hat{\theta}_{\setminus r}-\tilde{\theta}_{\setminus r}^{*}\big\Vert _{2}$ is controlled to be smaller than ${\tilde{\theta}_{\min}^{*}}/{2}$, then one can readily recover the structure of the neighborhood of node $r$ by setting the edges whose absolute estimated couplings are less than ${\tilde{\theta}_{\min}^{*}}/{2}$ to be absent \cite{lokhov2018optimal}. Subsequently, repeating this procedure over all the $p$  vertices, we are guaranteed through the union bound that exact reconstruction of the full edge set $E^*$ can be obtained with some predefined probability. 

\textbf{Some key results}: The key results for the proofs of Lasso with post-thresholding are given as follows. Specifically, Lemma \ref{lem:H-concentrate-result} and Lemma \ref{lem:H-bound-eign} are used to prove Lemma \ref{lem:strong-restricted}, which is then combined with Lemma \ref{lem:W_inf} to evaluate the conditions (C3) and (C4) via Proposition \ref{prop:square-error}, leading to the proof of Theorem  \ref{theorem-square-error}. 
\begin{lem}
\label{lem:W_inf}For any $\varepsilon_{3}>0$, if $n\geq d^{2}\log\frac{2p}{\varepsilon_{3}}$
, then probability at least $1-\varepsilon_{3}$
\begin{equation}
\left\Vert W^{n}\right\Vert _{\infty}\leq2\sqrt{\frac{\log\frac{2p}{\varepsilon_{3}}}{n}}.\label{eq:W-inf-norm-1}
\end{equation}
\end{lem}
\begin{proof}
See Appendix \ref{appendix:W_inf_convexity-proof}.
\end{proof}

The randomness of $\delta\ell\left(\triangle_{\theta_{\setminus r}},\tilde{\theta}_{\setminus r}^{*};\mathfrak{X}_{1}^{n}\right)$
can be controlled by $Q^{n}$, which concentrates towards its mean
independently of $\triangle_{\theta_{\setminus r}}$, as shown in
following lemma
\begin{lem}
\label{lem:H-concentrate-result}Let $\epsilon>0$, $\varepsilon_{4}>0$
and $n\geq\frac{2}{\epsilon^{2}}\log\frac{p^{2}}{\varepsilon_{4}}$,
then with probability greater than $1-\varepsilon_{4}$, we have for
all $s,t\in V\setminus r$
\[
\left|Q_{st}^{n}-Q^{*}_{st}\right|\leq\epsilon,
\]
where $Q_{st}^{n}=\frac{1}{n}\sum_{i=1}^{n}x_{t}^{(i)}x_{t}^{(i)}$ and  $Q^{*}_{st}=\mathbb{E}_{\theta^{*}}\left(x_{s}^{(i)}x_{t}^{(i)}\right),s,t\in V\setminus r$. 
\end{lem}
\begin{proof}
See Appendix \ref{lem:H-concentrate-result-proof}. 
\end{proof}

Lemma \ref{lem:H-bound-eign} states that the smallest eigenvalue of $Q^*$ is bounded below from zero independent of $p$.
\begin{lem}
\label{lem:H-bound-eign}(Lemma 7 in \cite{vuffray2016interaction})
For Ising model with graph $G\in\mathcal{G}_{p,d}$ with maximum coupling
strength $\theta_{\max}^{*}$. Then for all $\triangle_{\theta_{\setminus r}}\in\mathbb{R}^{p-1}$,
we have 
\[
\triangle_{\theta_{\setminus r}}^{T}Q^{*}\triangle_{\theta_{\setminus r}}\geq\frac{e^{-2\theta_{\max}^{*}d}}{d+1}\left\Vert \triangle_{\theta_{\setminus r}}\right\Vert _{2}^{2}.
\]
\end{lem}
Given the above results, the restricted strong convexity of the square
loss (\ref{eq:loss-def}) for Ising model problems is stated as follows.
\begin{lem}
\label{lem:strong-restricted}For Ising model with graph $G\in\mathcal{G}_{p,d}$
with maximum coupling strength $\theta_{\max}^{*}$, $\forall \varepsilon_{4}>0$, 
when $n>2^{11}d^{2}\left(d+1\right)^{2}e^{4\theta_{\max}^{*}d}\log\frac{p^{2}}{\varepsilon_{4}}$,
the square loss (\ref{eq:loss-def}) of Lasso satisfies, with probability at
least $1-\varepsilon_{4}$, the restricted strong convexity condition
\begin{equation}
\delta\ell\left(\triangle_{\theta_{\setminus r}},\tilde{\theta}_{\setminus r}^{*};\mathfrak{X}_{1}^{n}\right)\geq\frac{e^{-2\theta_{\max}^{*}d}}{4\left(d+1\right)}\left\Vert \triangle_{\theta_{\setminus r}}\right\Vert _{2}^{2}\label{eq:strong-restricted-cond}
\end{equation}
for all $\triangle_{\theta_{\setminus r}}\in\mathbb{R}^{p-1}$ such
that $\left\Vert \triangle_{\theta_{\setminus r}}\right\Vert _{1}\leq4\sqrt{d}\left\Vert \triangle_{\theta_{\setminus r}}\right\Vert _{2}$. 
\end{lem}
\begin{proof}
See Appendix \ref{appendix:strong-convexity-proof}. 
\end{proof}

\section{\label{sec:experimental results}Experimental Results}
In this section we conduct simulations to verify our theoretical findings that, simply speaking, Lasso performs similarly as $\ell_1$-LogR on typical tree-like graphs in the  paramagnetic phase. Two different structures of tree-like graphs are evaluated, namely RR graphs and star-shaped graphs. In addition, to have a look at the performance of Lasso for graphs with many loops,  we also evaluate the square lattice (grid) graphs with periodic boundary condition. It is worth noting that the RR and star-shaped graphs represent graphs with bounded node degree (the maximum node degree $d$ is a fixed constant) and unbounded node degree (the maximum node degree $d$ grows as the size of $p$), respectively. 

The experimental procedures are
as follows. First, a graph $G=\left(V,E\right)\in\mathcal{G}_{p,d}$
is generated and the Ising model is defined on it. Then, the spin
snapshots are obtained using Monte-Carlo sampling, yielding the dataset $\mathfrak{X}_{1}^{n}$.
The regularization parameter is set to be a constant factor of $\sqrt{\frac{\log p}{n}}$.
For any graph, we performed simulations using neighborhood-based Lasso (\ref{eq:lasso-estimator}) $\forall r \in V$ and then the associated signed neighborhood $\mathcal{\hat{N}}_{\pm}\left(r\right)$ is estimated as (\ref{eq:Neighbor-est}). Similar to \cite{ravikumar2010high},  the sample size $n$ scaling is set to be proportional to $d\log p$. For comparison, the results
of the $\ell_{1}$-LogR estimator \cite{ravikumar2010high} are also shown. The results are averaged over 200 trials in all cases. 

The results of RR graph and grid graph are shown in Figure \ref{fig:RR-result}.  In both cases, even for grid graph with many loops, using the Lasso estimator,
all curves for different 
model sizes $p$ line up with each other well, demonstrating that
for a graph with fixed degree $d$, the ratio $n/\log p$ controls
the success or failure of the Ising model selection. Importantly, the behavior of Lasso is about the same as  $\ell_1$-LogR.  
\begin{figure}[t!]
\centering{}\includegraphics[width=9cm]{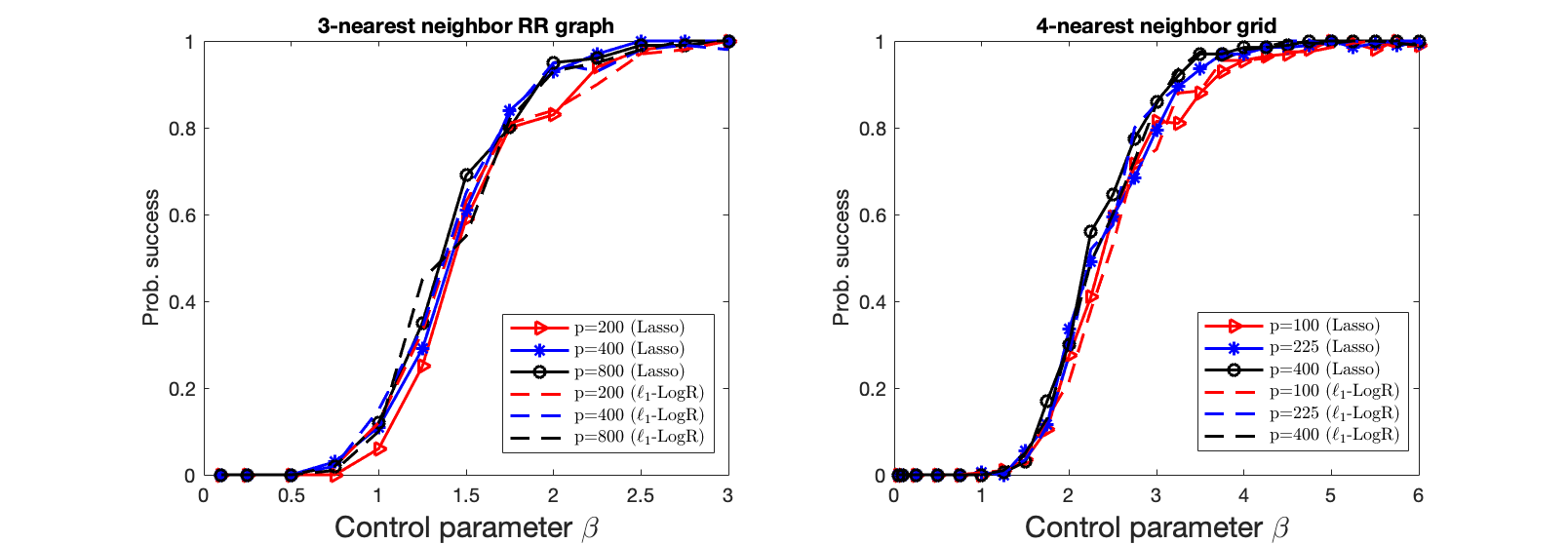}\caption{Success probability
versus the control parameter $\beta$ for Ising
models.  Left: RR graph with $d=3$ and mixed interactions
$\theta_{rt}^{*}=\pm0.4$ for all $\left(r,t\right)\in E$, $\beta=\frac{n}{10d\log p}$; Right: 4-nearest neighbor grid graph with $d=4$ and positive interactions
$\theta_{rt}^{*}=0.2$ for all $\left(r,t\right)\in E$, $\beta=\frac{n}{15d\log p}$. \label{fig:RR-result}}
\end{figure}

\begin{figure}[t!]
\centering{}\includegraphics[width=9cm]{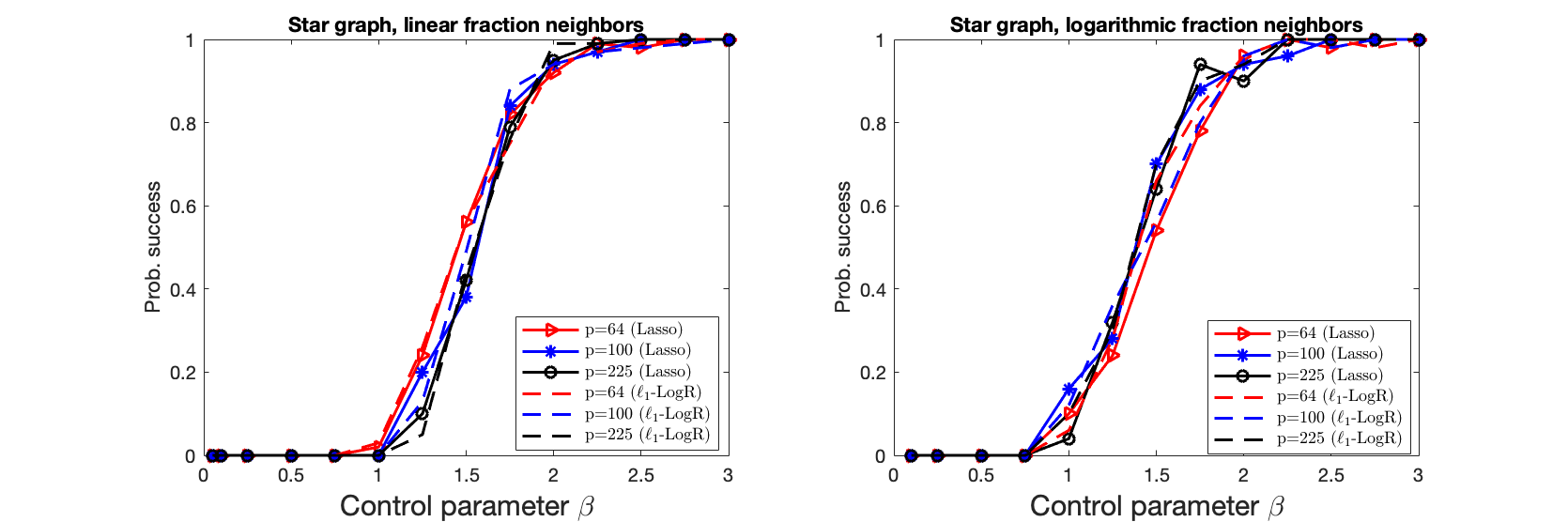}\caption{Success probability 
versus the control parameter $\beta=\frac{n}{10d\log p}$ for Ising
models on star-shaped graphs for attractive interactions $\theta_{rt}^{*}=\frac{ 1.2}{\sqrt{d}}$ for all $\left(r,t\right)\in E$.   Left: linear growth in degrees, i.e., $d = \left\lceil 0.1p \right\rceil $; Right: logarithmic growth in degrees, i.e., $d = \left\lceil \log{p} \right\rceil$. \label{fig:lin-star-result}}
\end{figure}

Figure \ref{fig:lin-star-result} shows results for star-shaped graph whose maximum degree $d$ is unbounded and grows as the dimension $p$ grows. Two kinds of star-shaped graphs are considered by designating one node as the hub and connecting it to $d<(p-1)$ of its neighbors. Specifically, for linear sparsity, it is assumed that $d = \left\lceil 0.1p \right\rceil$ while for logarithmic sparsity, we assume $d = \left\lceil \log{p} \right\rceil $. We use  positive interactions and set the active interactions to be $\theta_{rt}^{*}=\frac{1.2}{\sqrt{d}}$
for all $\left(r,t\right)\in E$ as \cite{ravikumar2010high}. As depicted in Figure   \ref{fig:lin-star-result},  Lasso behaves similarly as $\ell_1$-LogR in both cases, which is consistent with our theoretical analysis.

\section{\label{sec:conclusion}Conclusion}
We have theoretically analyzed the model selection consistency of Lasso, \textit{both with and without post-thresholding}, for the problem of  high-dimensional Ising model selection with a focus on the  paramagnetic phase. Specifically, in the case without post-thresholding, we prove that Lasso is model selection consistent with the same  order of sample complexity as that of $\ell_1$-LogR for RR graphs. For general tree-like graphs, similar result is obtained under mild assumptions of the \textit{dependency condition} and \textit{incoherence condition}. Moreover, in the case with post-thresholding, for general tree-like graphs, we not only obtain an upper  bound of the reconstructed square error of Lasso, but also prove the consistency of Lasso with post-thresholding with the same order of  sample complexity as that of $\ell_1$-LogR and RISE without any  assumptions on the \textit{dependency condition} and \textit{incoherence condition}. Experimental results are consistent with the theoretical analysis.

There are several interesting future directions for current study. First, since our focus in this paper is the paramagnetic phase, one important future work is to extend the current analysis to high-dimensional Ising models defined on general graphs beyond the paramagnetic phase, e.g., ferromagnetic phase, to see whether it still can, similarly as $\ell_1$-LogR and RISE, successfully recover the graph structure of Ising models with the same order of the number of samples.  Another future work is to investigate the performance of Lasso  for high-dimensional Ising model selection in the non-i.i.d. case \cite{dutt2021exponential}.  The study of other alternative simple and efficient methods for Ising model selection is also  an interesting topic for
future investigation.

\subsubsection*{Acknowledgements}
This work was supported by JSPS KAKENHI Nos. 17H00764, 18K11463, and 19H01812, 22H05117,
and JST CREST Grant Number JPMJCR1912, Japan.

\bibliography{mybib}

\appendix
\onecolumn

\section{\label{appendix:lemma-recaled-solution} Proof of Lemma \ref{lemma-recaled-solution}}
\begin{proof}
The gradient of the square loss $\ell\left(\theta_{\setminus r};\mathfrak{X}_{1}^{n}\right)$
in (\ref{eq:loss-def}) w.r.t. $\theta_{\setminus r}$ reads
\begin{equation}
\nabla\ell\left(\theta_{\setminus r};\mathfrak{X}_{1}^{n}\right)=\frac{1}{n}\sum_{i=1}^{n}x_{\setminus r}^{(i)}\left(x_{r}^{(i)}-\sum_{t\in V\setminus r}\theta_{rt}x_{t}^{(i)}\right).\label{eq:gradient-loss}
\end{equation}
After taking expectation of gradient $\nabla\ell\left(\theta_{\setminus r};\mathfrak{X}_{1}^{n}\right)$
over the distribution $\mathbb{P}_{\theta^{*}}\left(x\right)$ and
setting it to be zero, we obtain $\mathbb{E}_{\theta^{*}}\left(\nabla\ell\left(\theta_{\setminus r};\mathfrak{X}_{1}^{n}\right)\right)=0$
in matrix form:
\begin{align}
Q_{r}^{*}\theta_{\setminus r} & =b,\label{eq:Linear-equation}
\end{align}
where $Q_{r}^{*}=\mathbb{E}_{\theta^{*}}\left(X_{\setminus r}X_{r}^{T}\right)$
is the covariance matrix of $X_{\setminus r}$ and $b=\mathbb{E}_{\theta^{*}}\left(X_{\setminus r}X_{r}\right)$.
The solution to (\ref{eq:Linear-equation}), denoted as $\tilde{\theta}_{\setminus r}^{*}$, can be analytically obtained as $\tilde{\theta}_{\setminus r}^{*}=\left(Q_{r}^{*}\right)^{-1}b$.
Next, we construct the full covariance matrix $C=\mathbb{E}_{\theta^{*}}\left(XX^{T}\right)$
of all spins $X$ as follows
\begin{equation}
C=\left[\begin{array}{cc}
1 & b^{T}\\
b & Q_{r}^{*}
\end{array}\right],\label{eq:Covariance-matrix}
\end{equation}
where $X_{r}$ is indexed as the first variable in $C$ without loss
of generality. From the block matrix inversion
lemma, the inverse covariance matrix can be computed as
\begin{align}
C^{-1} =\left[\begin{array}{cc}
F_{11}^{-1} & -F_{11}^{-1}\left(\tilde{\theta}_{\setminus r}^{*}\right)^{T}\\
-\tilde{\theta}_{\setminus r}^{*}F_{11}^{-1} & F_{22}^{-1}
\end{array}\right],\label{eq:Covariance-matrix-1}
\end{align}
where
\begin{align}
F_{11} & =1-b^{T}\left(Q_{r}^{*}\right)^{-1}b,\\
F_{22} & =Q_{r}^{*}-bb^{T}.
\end{align}
On the other hand, for general tree-like graphs in the paramagnetic
phase, the inverse covariance matrix $C^{-1}$  can be computed from the Hessian
of the Gibbs free energy \cite{ricci2012bethe, nguyen2012bethe,Abbara2019c}. Specifically, each element of the covariance matrix  $C=\{C_{rt}\}_{r,t\in V}$ can be expressed as 
\begin{equation}
C_{rt} = \mathbb{E}_{\theta^*}(x_r x_t) -\mathbb{E}_{\theta^*}(x_r) \mathbb{E}_{\theta^*}(x_t)
= \frac{\partial^2 \log Z({\sigma}) }
{\partial \sigma_r \partial \sigma_t}, 
\end{equation}
where 
$Z({\sigma}) = \sum_{x} \mathbb{P}_{\theta^{*}}\left(x\right)
e^{\sum_{s\in V} \sigma_s x_s}$ with $\sigma=\{\sigma_s\}_{s\in V}$ and the assessment is carried out at ${\sigma} = {0}$. In addition, for technical convenience we introduce the Gibbs free energy  as
\begin{equation}
A\left({m}\right)=\underset{{\sigma}}{\max}\left\{ {\sigma}^{T}{m}-\log Z\left({\sigma}\right)\right\}.\label{eq:Gibbs-Gm-1}
\end{equation}
The definition of (\ref{eq:Gibbs-Gm-1}) indicates that following two  relations hold: 
\begin{align}
\frac{\partial m_r}{\partial \sigma_t} 
=\frac{\partial^2 \log Z({\sigma})}{\partial \sigma_r \partial \sigma_t} = C_{rt}, \\
\frac{\partial \sigma_r}{\partial m_t} =[C^{-1}]_{rt} 
= \frac{\partial^2 A({m})}{\partial m_r
\partial m_t},
\end{align}
where the evaluations are performed at ${\sigma} = {0}$ and ${m} = \arg\min _{{m}} A({m})$ ($={0}$ under the paramagnetic assumption). 
Consequently, the inverse covariance matrix of a tree-like graph $G\in \mathcal{G}_{p,d}$ can be computed
as \cite{ricci2012bethe, nguyen2012bethe,Abbara2019c}
\begin{align}
\left[C^{-1}\right]_{rt} & =\left(\sum_{u\in\mathcal{N}\left(r\right)}\frac{1}{1-\tanh^{2}\left(\theta_{ru}^{*}\right)}-d_{r}+1\right)\delta_{rt} \nonumber\\
&-\frac{\tanh\left(\theta_{rt}^{*}\right)}{1-\tanh^{2}\left(\theta_{rt}^{*}\right)}\left(1-\delta_{rt}\right).\label{eq:C0-inverse-matrix-1-v2}
\end{align}

The two representations of $C^{-1}$ in (\ref{eq:Covariance-matrix-1})
and (\ref{eq:C0-inverse-matrix-1-v2}) are equivalent so that the
corresponding elements should equal to each other. Thus, the
following identities hold
\begin{align}
\begin{cases}
F_{11}^{-1}=\sum_{u\in\mathcal{N}\left(r\right)}\frac{1}{1-\tanh^{2}\left(\theta_{ru}^{*}\right)}-d_{r}+1,\\
\tilde{\theta}_{\setminus r}^{*}F_{11}^{-1}=\frac{\tanh\left(\theta_{\setminus r}^{*}\right)}{1-\tanh^{2}\left(\theta_{\setminus r}^{*}\right)},
\end{cases}\label{eq:Jbar-condition}
\end{align}
where $\tanh\left(\cdot\right)$ is applied element-wise. From (\ref{eq:Jbar-condition}),
we obtain (\ref{eq:rescaled-result}), which is a rescaled version of the true interactions.
In particular, for RR graphs with constant coupling $\theta_{rt}^{*}=\theta_{0},\forall\left(r,t\right)\in E$
and $d_{r}=d$, substituting the results one can obtain
\begin{equation}
\tilde{\theta}_{rt}^{*}=\begin{cases}
\frac{\tanh\left(\theta_{0}\right)}{1+\left(d-1\right)\tanh^{2}\left(\theta_{0}\right)} & \textrm{if}\left(r,t\right)\in E;\\
0 & \textrm{otherwise.}
\end{cases}\label{eq:rescaled-result-RR}
\end{equation}
which completes the proof. 
\end{proof}

\section{{\label{appendix:RR-C1C2-proof} Proof of Lemma \ref{lem:RR-graph-(C1)(C2)}}}
The corresponding belief propagation
(BP) equation on a RR graph  can be written as follows \cite{mezard2009information}
\begin{equation}
m_{r\to t}=\tanh\left(\sum_{k\in\text{\ensuremath{\mathcal{N}\left(r\right)}}\backslash t}\tanh^{-1}\left(\tanh\left(\theta_{0}\right)m_{k\to r}\right)\right).
\end{equation}
where $m_{r\to t}$ is the message from node $r$ to node $t$. The
spontaneous magnetization for the node $r\in V$ is assessed as
\begin{equation}
m_{r}=\tanh\left(\sum_{t\in\text{\ensuremath{\mathcal{N}\left(r\right)}}}\tanh^{-1}\left(\tanh\left(\theta_{0}\right)m_{t\to r}\right)\right).
\end{equation}

Due to the uniformity of RR graphs, these equations are reduced to
\begin{align}
m_{c} & =\tanh\left(\left(d-1\right)\tanh^{-1}\left(\tanh\left(\theta_{0}\right)m_{c}\right)\right),\\
m & =\tanh\left(d\tanh^{-1}\left(\tanh\left(\theta_{0}\right)m_{c}\right)\right),
\end{align}
where we set $m_{r\to t}\coloneqq m_{c}$ and $m_{r}\coloneqq m$ for all
directed edges $r\to t$ and all nodes $r\in V$.

Suppose that $x=(x_{r})_{r=1}^{p}$ is subject to a Hamiltonian $H\left(x\right)=-\sum_{s\neq t}\theta_{rt}^{*}x_{r}x_{t}$.
For this, we define the Helmholtz free energy as
\begin{equation}
F\left(\xi\right)=-\ln\left(\sum_{x}\exp\left(-H\left(x\right)+\sum_{r=1}^{p}\xi_{r}x_{r}\right)\right).
\end{equation}

Using $F\left(\xi\right)$, one can evaluate the expectation
as
\begin{equation}
m_r \coloneqq \ensuremath{\mathbb{E}_{\theta^{*}}\left\{ x_{r}\right\} }=-\left.\frac{\partial F\left(\xi\right)}{\partial\xi_{r}}\right|_{\xi=0}=\frac{\sum_{x}x_{r}\exp\left(-H\left(x\right)\right)}{\sum_{x}\exp\left(-H\left(x\right)\right)}.
\end{equation}

In addition, the covariance of $x_{r}$ and $x_{t}$ can be computed
as
\begin{align}
&\mathbb{E}_{\theta^{*}}\left\{ x_{r}x_{t}\right\} -\mathbb{E}_{\theta^{*}}\left\{ x_{r}\right\} \mathbb{E}_{\theta^{*}}\left\{ x_{t}\right\} = \left.\frac{\partial\ensuremath{\mathbb{E}_{\theta^{*}}\left\{ x_{r}\right\} }}{\partial\xi_{t}}\right|_{\xi=0} \nonumber \\
=&\frac{\sum_{x}x_{r}x_{t}\exp\left(-H\left(x\right)\right)}{\sum_{x}\exp\left(-H\left(x\right)\right)} -\frac{\sum_{x}x_{r}\exp\left(-H\left(x\right)\right)}{\sum_{x}\exp\left(-H\left(x\right)\right)}\cdot\frac{\sum_{x}x_{t}\exp\left(-H\left(x\right)\right)}{\sum_{x}\exp\left(-H\left(x\right)\right)},\label{eq:linear-response}
\end{align}
where the last equation is termed the \textit{linear response relation \cite{nishimori2001statistical}.}

Suppose that node $r$ is placed at the distance of $l$ from node
$t$. A remarkable property of tree-like graphs, including typical RR graphs, is that a unique path is
defined between two arbitrary nodes. This indicates that the linear
response relation (\ref{eq:linear-response}) can be evaluated by
the chain rule of partial derivative using messages of belief propagation
as 
\begin{align}
&\mathbb{E}_{\theta^{*}}\left\{ x_{r}x_{t}\right\} -\mathbb{E}_{\theta^{*}}\left\{ x_{r}\right\} \mathbb{E}_{\theta^{*}}\left\{ x_{t}\right\} =\left.\frac{\partial m_{r}}{\partial\xi_{t}}\right|_{\xi=0} \nonumber \\
&=(1-m^{2})\left(\frac{\tanh\left(\theta_{0}\right)(1-m_{c}^{2})}{1-\tanh^{2}\left(\theta_{0}\right)m_{c}^{2}}\right)^{l}.\label{eq:cov-result-general}
\end{align}

In the the paramagnetic phase where  $m=0$ and $m_{c}=0$, we have  
\begin{equation}
\mathbb{E}_{\theta^{*}}\left\{ x_{r}x_{t}\right\} -\mathbb{E}_{\theta^{*}}\left\{ x_{r}\right\} \mathbb{E}_{\theta^{*}}\left\{ x_{t}\right\} =\tanh^{l}\left(\theta_{0}\right).\label{eq:cov-result}
\end{equation}

Let us examine the \textit{dependency condition} (C1).  Since the distances between any two different nodes in  $S\coloneqq\left\{ \left(r,t\right)\mid t\in\mathcal{N}\left(r\right)\right\}$
are 2, all the off-diagonal elements in sub-matrix $Q_{SS}^{*}$ equal to $\tanh^2{\theta_{0}}$ and all the diagonal elements  equal to 1, i.e., 
\begin{align}
&Q_{SS}^{*} = \nonumber \\
&\left[\begin{array}{ccccc}
1 & \tanh^{2}\theta_{0} & \tanh^{2}\theta_{0} & \cdots & \tanh^{2}\theta_{0}\\
\tanh^{2}\theta_{0} & 1 & \tanh^{2}\theta_{0} & \vdots & \tanh^{2}\theta_{0}\\
\tanh^{2}\theta_{0} & \tanh^{2}\theta_{0} & \ddots & \tanh^{2}\theta_{0} & \vdots\\
\vdots & \cdots & \tanh^{2}\theta_{0} & 1 & \tanh^{2}\theta_{0}\\
\tanh^{2}\theta_{0} & \tanh^{2}\theta_{0} & \cdots & \tanh^{2}\theta_{0} & 1
\end{array}\right]_{d\times d}.  \label{eq:Qss_structure}
\end{align}
It can be analytically computed that $Q_{SS}^{*}$ has two different eigenvalues: one is $1+(d-1)\tanh^{2}\theta_{0}$ and the other is $1 - \tanh^{2}\theta_{0}$ with multiplicity $(d-1)$. Consequently, $Q_{SS}^{*}$ has bounded eigenvalue and we explicitly obtain  the result of $C_{\min}$ as 
\begin{align}
\Lambda_{\min}\left(Q_{SS}^{*}\right) = 1 - \tanh^{2}\theta_{0} \coloneqq C_{\min}.  \label{eq:eigen-upper-lower-1}
\end{align}
Then, we prove that the \textit{incoherence condition} (C2) also satisfies. From (\ref{eq:Qss_structure}), the inverse matrix $\left(Q_{SS}^{*}\right)^{-1}$ can be analytically computed as 
\begin{align}
    \left(Q_{SS}^{*}\right)^{-1} = \left[\begin{array}{ccccc}
a & b & b & \cdots & b\\
b & a & b & \vdots & b\\
b & b & \ddots & b & \vdots\\
\vdots & \cdots & b & a & b\\
b & b & \cdots & b & a
\end{array}\right]_{d\times d},\label{eq: Q_inv}
\end{align}
where 
\begin{align}
a	&=\frac{1+\left(d-2\right)\tanh^{2}\theta_{0}}{\left(1-\tanh^{2}\theta_{0}\right)\left(1+\left(d-1\right)\tanh^{2}\theta_{0}\right)}, \\
b	&=-\frac{\tanh^{2}\theta_{0}}{\left(1-\tanh^{2}\theta_{0}\right)\left(1+\left(d-1\right)\tanh^{2}\theta_{0}\right)}.
\end{align}
Then, by definition of $\interleave Q_{S^{c}S}^{*}\left(Q_{SS}^{*}\right)^{-1}\interleave_{\infty}$, it is achieved for $r\in S^{c}$ where $r$ belongs to the  nearest neighbors of the nodes in $S$. Specifically, in that case, the elements in the row in  $Q_{S^{c}S}^{*}$  associated with node $r\in S^{c}$ can only take two different values: one element is $\tanh{\theta_{0}}$ and the other ($d-1$) elements are $\tanh^{3}\theta_{0}$. Then, from (\ref{eq: Q_inv}), after some algebra, it can be calculated that 
\begin{align}
\interleave Q_{S^{c}S}^{*}\left(Q_{SS}^{*}\right)^{-1}\interleave_{\infty} = \tanh\theta_{0} \coloneqq 1 - \alpha,
\end{align}
where we obtain an analytical result $\alpha \coloneqq 1 - \tanh\theta_{0} \in(0,1]$, which completes the proof.

\section{Proofs of the key results}
\subsection{\label{appendix:lemma-Z_s_bounded_zeromean_var} Proof of Lemma \ref{lem: Ws-Zs-results}}
\begin{proof}
The result that $\mathbb{E}_{\theta^{*}}\left(Z_{s}^{\left(i\right)}\right)=0$ can be readily obtained by the definition of  $\tilde{\theta}_{\setminus r}^{*}$ in Lemma \ref{lemma-recaled-solution}.
Thus, to prove $\mathtt{Var}\left(Z_{s}^{\left(i\right)}\right)\leq1$, it suffices to prove $\mathbb{E}_{\theta^{*}}\left(\left(Z_{s}^{\left(i\right)}\right)^{2}\right)\leq1$ in the paramagnetic phase.

We introduce an auxiliary function
\begin{equation}
f_{1}\left(\theta_{\setminus r}\right)=\mathbb{E}_{\theta^{*}}\left(x_{r}^{(i)}-\sum_{t\in V\setminus r}\theta_{t}x_{t}^{(i)}\right)^{2}. \label{eq:fr_func}
\end{equation}
Thus we have $\mathbb{E}_{\theta^{*}}\left(\left(Z_{s}^{\left(i\right)}\right)^{2}\right)=f_{1}\left(\tilde{\theta}_{\setminus r}^{*}\right)$.
The gradient vector can be computed as $\nabla f_{1}\left(\theta_{\setminus r}\right)=2\mathbb{E}_{\theta^{*}}\left(\nabla\ell\left(\theta_{\setminus r};\mathfrak{X}_n\right)\right)$.
Since $\mathbb{E}_{\theta^{*}}\left(\nabla\ell\left(\tilde{\theta}_{\setminus r}^{*};\mathfrak{X}_n\right)\right)=0$
as shown in Lemma \ref{lemma-recaled-solution}, we have $\nabla f_{1}\left(\tilde{\theta}_{\setminus r}^{*}\right)=0$.
Moreover, since $\nabla^{2}f_{1}\left(\theta_{\setminus r}\right)=2\mathbb{E}_{\theta^{*}}\left(X_{\setminus r}X^T_{\setminus r}\right)\succ0$, we can conclude that $f_{1}\left(\theta_{\setminus r}\right)$ reaches
its minimum at $\theta_{\setminus r}=\tilde{\theta}_{\setminus r}^{*}$. 
As a result, we have 
\begin{align}
\mathbb{E}_{\theta^{*}}\left(\left(Z_{s}^{\left(i\right)}\right)^{2}\right)= & f_{1}\left(\theta_{\setminus r}=\tilde{\theta}_{\setminus r}^{*}\right)\nonumber \\
\leq & f_{1}\left(\theta_{\setminus r}=0\right) \nonumber  \\
= & \mathbb{E}_{\theta^{*}}\left(x_{r}^{(i)}\right)^{2} \nonumber \\
= & 1,
\end{align}
where in the last line the fact that $x_{r}^{(i)}\in\left\{ -1,+1\right\} ,\forall r\in V$ is used. Therefore,
we obtain $\mathtt{Var}\left(Z_{s}^{\left(i\right)}\right)\leq1$. 

Moreover, the  absolute value $\left|Z_{s}^{\left(i\right)}\right|$ is bounded. Specifically, 
(a) for RR graphs, in the paramagnetic phase, we have 
\begin{align}
    \left|Z_{s}^{\left(i\right)}\right| &=  \left|x_{s}^{(i)}(x_{r}^{(i)}-\sum_{t\in V\setminus r}\tilde{\theta}_{rt}^{*}x_{t}^{(i)})\right| \nonumber\\
    & \leq   1 + \sum_{t\in V\setminus r}|\tilde{\theta}_{rt}^{*}| \nonumber \\
  & =  1 + \frac{d\tanh\left(\theta_{0}\right)}{1+\left(d-1\right)\tanh^{2}\left(\theta_{0}\right)} \nonumber \\
   & \leq 2. \label{Z_s-bound-para}
\end{align}

(b) for general tee-like graphs, recalling the result (\ref{eq:rescaled-result}), we have
\begin{align}
&\left(\sum_{u\in\mathcal{N}\left(r\right)}\frac{1}{1-\tanh^{2}\left(\theta_{ru}^{*}\right)}-d_{r}+1\right)\sum_{t\in V\setminus r}\left|\tilde{\theta}_{rt}^{*}\right| \nonumber  \\ =&\sum_{t\in\mathcal{N}\left(r\right)}\frac{\left|\tanh\left(\theta_{rt}^{*}\right)\right|}{1-\tanh^{2}\left(\theta_{rt}^{*}\right)} \nonumber \\
  =&\sum_{t\in\mathcal{N}\left(r\right)}\frac{\left|\tanh\left(\theta_{rt}^{*}\right)\right|+1-\tanh^{2}\left(\theta_{rt}^{*}\right)+\tanh^{2}\left(\theta_{rt}^{*}\right)-1}{1-\tanh^{2}\left(\theta_{rt}^{*}\right)}\nonumber\\
 =&-d_{r}+\sum_{t\in\mathcal{N}\left(r\right)}\frac{\left|\tanh\left(\theta_{rt}^{*}\right)\right|+1-\tanh^{2}\left(\theta_{rt}^{*}\right)}{1-\tanh^{2}\left(\theta_{rt}^{*}\right)}, \label{eq:Bound-temp-result}
\end{align}
To proceed, consider an auxiliary function $f_2\left(x\right)=x+1-x^{2},0\leq x\leq1$.
Then it can be proved that $1\leq f_2\left(x\right)\leq\frac{5}{4}$,
so that from (\ref{eq:Bound-temp-result}), we have
\begin{equation}
\sum_{t\in V\setminus r}\left|\tilde{\theta}_{rt}^{*}\right|\leq\frac{-d_{r}+\frac{5}{4}\sum_{u\in\mathcal{N}\left(r\right)}\frac{1}{1-\tanh^{2}\left(\theta_{ru}^{*}\right)}}{\sum_{u\in\mathcal{N}\left(r\right)}\frac{1}{1-\tanh^{2}\left(\theta_{ru}^{*}\right)}-d_{r}+1}.\label{eq:Bound-temp-result1}
\end{equation}
It can be easily checked that $\sum_{u\in\mathcal{N}\left(r\right)}\frac{1}{1-\tanh^{2}\left(\theta_{ru}^{*}\right)}\in[d_{r},\infty)$.
We introduce another auxiliary function
\begin{equation}
f_3\left(x\right)=\frac{-d_{r}+\frac{5}{4}x}{x-d_{r}+1},x\in[d_{r},\infty).
\end{equation}
The first-order derivative of $f_3\left(x\right)$ can be easily computed
as
\begin{equation}
f_3^{'}\left(x\right)=\frac{5-d_{r}}{4\left(x-d_{r}+1\right)^{2}}.\label{eq:f-derivative-1}
\end{equation}
As a result, $f_3^{'}\left(x\right)>0$ when $d_{r}<5$ and $f_3^{'}\left(x\right)<0$
when $d_{r}>5$. Consequently,
\begin{equation}
\underset{x\in[d_r,\infty)}{\max}f_3\left(x\right)=\begin{cases}
\frac{5}{4} & d_{r}\leq5\\
\frac{d_{r}}{4} & d_{r}>5
\end{cases}
\end{equation}
Finally, combining the above results together yields 
\begin{equation}
\left|Z_{s}^{\left(i\right)}\right|  \leq\max\left\{ \frac{9}{4},\frac{4+d_{r}}{4}\right\}  <d_{r},\forall d_{r}\geq3.
\end{equation}
By definition, there is $d_{r}\leq d$ so that $\left|Z_{s}^{\left(i\right)}\right|\leq d$,
which completes the proof. 
\end{proof}

\subsection{\label{appendix:lemma-W_inf_norm} Proof of Lemma \ref{lem:W_inf_norm}}
\begin{proof}
Frist, we prove the case (a). In this case, According to Lemma \ref{lem: Ws-Zs-results}, $\mathbb{E}_{\theta^{*}}\left(Z_{s}^{\left(i\right)}\right)=0$ and $\left|Z_{s}^{\left(i\right)}\right|\leq 2$, so that by the Azuma Hoeffding inequality \cite{vershynin2018high}, for $\forall\eta>0$, we have
\begin{equation}
\mathbb{P}\left(\left|W_{s}^{n}\right|>\eta\right)\leq2\exp\left(-\frac{\eta^{2}n}{8}\right).\label{eq:Wns_inequality-Hoef}
\end{equation}
Setting $\eta = \frac{\alpha\lambda_n}{2(2-\alpha)}$, we 
obtain 
\begin{equation}
\mathbb{P}\left(\frac{2-\alpha}{\lambda_{n}}\left|W_{s}^{n}\right|>\frac{\alpha}{2}\right) \leq 2\exp\left(-\frac{\alpha^{2}\lambda_{n}^2 n}{32(2-\alpha)^2}\right). 
\end{equation}
Then, by using a union bound we have
\begin{equation}
\mathbb{P}\left(\frac{2-\alpha}{\lambda_{n}}\left\Vert W^{n}\right\Vert _{\infty}\geq\frac{\alpha}{2}\right)  \leq 2\exp\left(-\frac{\alpha^{2}\lambda_{n}^2 n}{32(2-\alpha)^2} + \log p\right),
\end{equation}
which completes the proof of (a). 

In the case (b) for general graphs, the proof is slightly complicated. According to Lemma \ref{lem: Ws-Zs-results}, applying the
Bernstein's inequality \cite{vershynin2018high}, $\forall\eta>0$ we have 
\begin{equation}
\mathbb{P}\left(\left|W_{s}^{n}\right|>\eta\right)\leq2\exp\left(-\frac{\frac{1}{2}\eta^{2}n}{1+\frac{1}{3}d\eta}\right).\label{eq:Wns_inequality}
\end{equation}
Similar to \cite{vuffray2016interaction}, inverting the following
relation
\begin{equation}
\xi=\text{\ensuremath{\frac{\frac{1}{2}\eta^{2}n}{1+\frac{1}{3}d\eta}}}\label{eq:s-invert},
\end{equation}
and substituting the result in (\ref{eq:Wns_inequality}) yields
\begin{equation}
\mathbb{P}\left(\left|W_{s}^{n}\right|>\frac{1}{3}\left(u+\sqrt{u^{2}+18\frac{u}{d}}\right)\right)\leq2\exp\left(-\xi\right),\label{eq:Wns_inequality-1}
\end{equation}
where $u=\frac{\xi}{n}d$. Suppose that $n\geq\xi d^{2}$, then $u^{2}=\frac{\xi^{2}}{n^{2}}d^{2}\leq\frac{\xi}{n}$
while $\frac{u}{d}=\frac{\xi}{n}$. Consequently, we have 
\begin{align}
\frac{1}{3}\left(u+\sqrt{u^{2}+18\frac{u}{d}}\right) & \leq\frac{1}{3}\left(\sqrt{\frac{\xi}{n}}+\sqrt{\frac{\xi}{n}+18\frac{\xi}{n}}\right)\\
 & \leq\frac{1}{3}\left(\sqrt{\frac{\xi}{n}}+\sqrt{\frac{\xi}{n}}\sqrt{25}\right)\\
 & =2\sqrt{\frac{\xi}{n}},
\end{align}
where a relaxed result is obtained. Subsequently, we obtain an expression
which is independent of $d$: 
\begin{equation}
\mathbb{P}\left(\left|W_{s}^{n}\right|>2\sqrt{\frac{\xi}{n}}\right)\leq2\exp\left(-\xi\right).\label{eq:Wns_inequality-1-1}
\end{equation}
Setting $\xi=\left(c+1\right)\log p$, then if $\lambda_{n}\geq\frac{4\left(2-\alpha\right)\sqrt{c+1}}{\alpha}\sqrt{\frac{\log p}{n}}$,
we have $\frac{\alpha\lambda_{n}}{2\left(2-\alpha\right)}\geq2\sqrt{\frac{\xi}{n}}$
so that 
\begin{align}
&\mathbb{P}\left(\frac{2-\alpha}{\lambda_{n}}\left|W_{s}^{n}\right|>\frac{\alpha}{2}\right) \leq\mathbb{P}\left(\left|W_{s}^{n}\right|>2\sqrt{\frac{\xi}{n}}\right)  \nonumber \\
 &\leq2\exp\left(-\left(c+1\right)\log p\right). 
\end{align}
Then, by using a union bound we have
\begin{align}
\mathbb{P}\left(\frac{2-\alpha}{\lambda_{n}}\left\Vert W^{n}\right\Vert _{\infty}\geq\frac{\alpha}{2}\right) \leq2\exp\left(-c\log p\right).
\end{align}
As a result, when $n\geq\left(c+1\right)d^{2}\log p$, as long as
${\lambda_{n}\geq\frac{4\sqrt{c+1}\left(2-\alpha\right)}{\alpha}\sqrt{\frac{\log p}{n}}}$,
it is guaranteed that $\mathbb{P}\left(\frac{2-\alpha}{\lambda_{n}}\left\Vert W^{n}\right\Vert _{\infty}\geq\frac{\alpha}{2}\right)\rightarrow0$
at rate $\exp\left(-c\log p\right)$ for some constant $c>0$, which
completes the proof. 
\end{proof}

\subsection{\label{appendix:lem:L2-consistency} Proof of Lemma \ref{lem:L2-consistency}}
\begin{proof}
Using the method in \cite{rothman2008sparse}, here the proof follows
\cite{ravikumar2010high} but with essential modifications.
First, define a function $\mathbb{R}^{d}\rightarrow\mathbb{R}$
as follows \cite{rothman2008sparse}
\begin{align}
G\left(u_{S}\right)\coloneqq & \ell\left(\tilde{\theta}_{S}^{*}+u_{S};\mathfrak{X}_n\right)-\ell\left(\tilde{\theta}_{S}^{*};\mathfrak{X}_n\right) \nonumber \\ &+\lambda_{n}\left(\left\Vert \tilde{\theta}_{S}^{*}+u_{S}\right\Vert _{1}-\left\Vert \tilde{\theta}_{S}^{*}\right\Vert _{1}\right).\label{eq:G-fun-def}
\end{align}
Note that $G$ is a convex function w.r.t. $u_S$. Then $\hat{u}_{S}=\hat{\theta}_{S}-\tilde{\theta}_{S}^{*}$ minimizes
$G$ according to the definition in (\ref{eq:lasso-estimator}). Moreover,
it is easily seen that $G\left(0\right)=0$ so that $G\left(\hat{u}_{S}\right)\leq0$.
As described in \cite{ravikumar2010high}, if we can show that
there exists some radius $B>0$ and any  $u_{S}\in\mathbb{R}^{d}$ with $\left\Vert u_{S}\right\Vert _{2}=B$ satisfies $G(u_S)>0$,  
then we can claim that $\left\Vert \hat{u}_{S}\right\Vert _{2}\leq B$
since otherwise one can always, by appropriately choosing $t\in(0,1]$, find a convex combination $t\hat{u}_{S}+\left(1-t\right)0$ which
lies on the boundary of the ball with radius $B$ and thus $G\left(t\hat{u}_{S}+\left(1-t\right)0\right)\leq0$,
leading to contradiction. Consequently, it suffices to establish the strict
positivity of $G$ on the boundary of a ball with radius $B=M\lambda_{n}\sqrt{d}$, 
where $M>0$ is one parameter to choose later. 

Specifically, let $u_{S}\in\mathbb{R}^{d}$ be an arbitrary vector
with $\left\Vert u_{S}\right\Vert _{2}=B$. Expanding the quadratic form
$\ell\left(\tilde{\theta}_{S}^{*}+u_{S};\mathfrak{X}_n\right)$,
we have
\begin{align}
G\left(u_{S}\right)=& -\left(W_{S}^{n}\right)^{T}u_{S}+u_{S}^{T}Q_{SS}^{n}u_{S} \nonumber \\
&+\lambda_{n}\left(\left\Vert \tilde{\theta}_{S}^{*}+u_{S}\right\Vert _{1}-\left\Vert \tilde{\theta}_{S}^{*}\right\Vert _{1}\right),\label{eq:G-Taylor}
\end{align}
where $W_{S}^{n}$ is the sub-vector of $W^{n}=-\nabla\ell\left(\tilde{\theta}^{*};\mathfrak{X}_n\right)$, and 
$Q_{SS}^{n}$ is the sub-matrix of the sample matrix $Q^{n}$. The expression (\ref{eq:G-Taylor}) is simpler than the counterpart in \cite{ravikumar2010high}  which is obtained from the Taylor series expansion of the non-quadratic loss function and thus its quadratic term is dependent on $\theta$. 
To proceed, we investigate the bounds
of the three terms in the right hand side (RHS) of (\ref{eq:G-Taylor}), respectively. 

Since $\left\Vert u_{S}\right\Vert _{1}\leq\sqrt{d}\left\Vert u_{S}\right\Vert _{2}$
and $\left\Vert W_{S}^{n}\right\Vert _{\infty}\leq\frac{\lambda_{n}}{2}$,
the first term is bounded as
\begin{align}
\left|-\left(W_{S}^{n}\right)^{T}u_{S}\right|&\leq\left\Vert W_{S}^{n}\right\Vert _{\infty}\left\Vert u_{S}\right\Vert _{1}\leq\left\Vert W_{S}^{n}\right\Vert _{\infty}\sqrt{d}\left\Vert u_{S}\right\Vert _{2}\nonumber \\
&\leq\left(\lambda_{n}\sqrt{d}\right)^{2}\frac{M}{2}.\label{eq:first-term-bound}
\end{align}
The third term is bounded as
\begin{align}
& \lambda_{n}\left(\left\Vert  \tilde{\theta}_{S}^{*}+u_{S}\right\Vert _{1}-\left\Vert \tilde{\theta}_{S}^{*}\right\Vert _{1}\right)\nonumber \\
&\geq-\lambda_{n}\left\Vert u_{S}\right\Vert _{1}\geq-\lambda_{n}\sqrt{d}\left\Vert u_{S}\right\Vert _{2}\nonumber\\
&=-M\left(\lambda_{n}\sqrt{d}\right)^{2}.\label{eq:second-term-bound}
\end{align}

The remaining middle Hessian term in RHS of (\ref{eq:G-Taylor}) is,  different from \cite{ravikumar2010high}, quite simple due to the square loss function:
\begin{align}
u_{S}^{T}Q_{S}^{n}u_{S} & \geq\left\Vert u_{S}\right\Vert _{2}^{2}\Lambda_{\min}\left(Q_{SS}^{n}\right)\nonumber \\
 & \geq C_{\min}M^{2}\left(\lambda_{n}\sqrt{d}\right)^{2},\label{eq:third-term-bound}
\end{align}
where the last inequality comes from the dependency condition  $\Lambda_{\min}\left(Q_{SS}^{*}\right)\geq C_{\min}$
in (\ref{eq:eigen-upper-lower}). In contrast to \cite{ravikumar2010high},
there is no need to control the additional spectral norm. 

Combining the three bounds (\ref{eq:first-term-bound}) - (\ref{eq:third-term-bound}) together with (\ref{eq:G-Taylor}), we obtain that
\begin{equation}
G\left(u_{S}\right)\geq\left(\lambda_{n}\sqrt{d}\right)^{2}\left\{ -\frac{M}{2}+C_{\min}M^{2}-M\right\} .\label{eq:G-lower-bound}
\end{equation}
It can be easily verified from (\ref{eq:G-lower-bound}) that $G\left(u_{S}\right)$
is strictly positive when we choose $M=\frac{3}{C_{\min}}$. Consequently,
as long as $\left\Vert W^{n}\right\Vert _{\infty}\leq\frac{\lambda_{n}}{2}$,
we are guaranteed that $\left\Vert \hat{u}_{S}\right\Vert _{2}\leq M\lambda_{n}\sqrt{d}=\frac{3\lambda_{n}\sqrt{d}}{C_{\min}}$,
which completes the proof. 
\end{proof}

\subsection{Proof of Lemma \ref{lem:W_inf}\label{appendix:W_inf_convexity-proof}}
\begin{proof}
According to Lemma \ref{lem: Ws-Zs-results}, applying the
Bernstein's inequality, $\forall\eta>0$ we have 
\begin{equation}
\mathbb{P}\left(\left|W_{s}^{n}\right|>\eta\right)\leq2\exp\left(-\frac{\frac{1}{2}\eta^{2}n}{1+\frac{1}{3}d\eta}\right).\label{eq:Wns_inequality-2}
\end{equation}
Similar to \cite{vuffray2016interaction}, inverting the following
relation
\begin{equation}
\xi=\text{\ensuremath{\frac{\frac{1}{2}\eta^{2}n}{1+\frac{1}{3}d\eta}}}\label{eq:s-invert}
\end{equation}
and substituting the result in (\ref{eq:Wns_inequality-2}) yields
\begin{equation}
\mathbb{P}\left(\left|W_{s}^{n}\right|>\frac{1}{3}\left(u+\sqrt{u^{2}+18\frac{u}{d}}\right)\right)\leq2\exp\left(-\xi\right).\label{eq:Wns_inequality-1}
\end{equation}
where $u=\frac{\xi}{n}d$. Suppose that $n\geq\xi d^{2}$, then $u^{2}=\frac{\xi^{2}}{n^{2}}d^{2}\leq\frac{\xi}{n}$
while $\frac{u}{d}=\frac{\xi}{n}$. Consequently, we have
\begin{align}
\frac{1}{3}\left(u+\sqrt{u^{2}+18\frac{u}{d}}\right) & \leq\frac{1}{3}\left(\sqrt{\frac{\xi}{n}}+\sqrt{\frac{\xi}{n}+18\frac{\xi}{n}}\right)\\
 & \leq\frac{1}{3}\left(\sqrt{\frac{\xi}{n}}+\sqrt{\frac{\xi}{n}}\sqrt{25}\right)\\
 & =2\sqrt{\frac{\xi}{n}}.
\end{align}
where a relaxed result is obtained. Subsequently, we obtain an expression
which is independent of $d$
\begin{equation}
\mathbb{P}\left(\left|W_{s}^{n}\right|>2\sqrt{\frac{\xi}{n}}\right)\leq2\exp\left(-\xi\right).\label{eq:Wns_inequality-1-1}
\end{equation}
Then, by using a union bound we have
\begin{align}
\mathbb{P}\left(\left\Vert W^{n}\right\Vert _{\infty}>2\sqrt{\frac{\xi}{n}}\right) & \leq2\exp\left(-\xi+\log p\right).
\end{align}
Setting $\xi=\log\frac{2p}{\varepsilon_{3}}$, then if $n\geq d^{2}\log\frac{2p}{\varepsilon_{3}}$,
we have
\begin{align}
\mathbb{P}\left(\left\Vert W^{n}\right\Vert _{\infty}>2\sqrt{\frac{\log\frac{2p}{\varepsilon_{3}}}{n}}\right) & \leq2\exp\left(-\log\frac{2p}{\varepsilon_{3}}+\log p\right)\\
 & =\varepsilon_{3},
\end{align}
which completes the proof. 
\end{proof}

\subsection{Proof of Lemma \ref{lem:H-concentrate-result}\label{lem:H-concentrate-result-proof}}
\begin{proof}
Since $x_{r}^{(i)}x_{t}^{(i)}$ is bounded by $\left|x_{r}^{(i)}x_{t}^{(i)}\right|\leq1$.
Therefore, using the Hoeffding inequality \cite{hoeffding1994probability},
for any $\epsilon>0$, there is 
\begin{align}
\mathbb{P}\left(\left|Q_{st}^{n}-Q^{*}_{st}\right|>\epsilon\right) & \leq2\exp\left(-\frac{n\epsilon^{2}}{2}\right).
\end{align}
Then, due to the symmetry of $Q_{st}^{n}$, using a union bound we have
\begin{align}
\mathbb{P}\left(\left|Q_{st}^{n}-Q^{*}_{st}\right|\leq\epsilon,\forall s,t\in V\setminus r\right) & \geq1-p^{2}\exp\left(-\frac{n\epsilon^{2}}{2}\right),
\end{align}
As a result, as long as $n\geq\frac{2}{\epsilon^{2}}\log\frac{p^{2}}{\varepsilon_{4}}$, there is 
$\mathbb{P}\left(\left|Q_{st}^{n}-Q^{*}_{st}\right|\leq\epsilon,\forall s,t\in V\setminus r\right)\geq1-\varepsilon_{4}$, 
which completes the proof.
\end{proof}

\subsection{Proof of Lemma \ref{lem:strong-restricted}\label{appendix:strong-convexity-proof}}
\begin{proof}
According (\ref{eq:delta-loss-def}) and Lemma \ref{lem:H-bound-eign},
we have
\begin{align}
&\delta\ell\left(\triangle_{\theta_{\setminus r}},\tilde{\theta}_{\setminus r}^{*};\mathfrak{X}_{1}^{n}\right) \nonumber \\
&=\frac{1}{2}\triangle_{\theta_{\setminus r}}^{T}Q^{n}\triangle_{\theta_{\setminus r}}\nonumber \\
 & =\frac{1}{2}\triangle_{\theta_{\setminus r}}^{T}Q^{*}\triangle_{\theta_{\setminus r}}+\frac{1}{2}\triangle_{\theta_{\setminus r}}^{T}\left(Q^{n}-Q^{*}\right)\triangle_{\theta_{\setminus r}}\nonumber \\
 & \geq\frac{e^{-2\theta_{\max}^{*}d}}{2\left(d+1\right)}\left\Vert \triangle_{\theta_{\setminus r}}\right\Vert _{2}^{2}+\frac{1}{2}\triangle_{\theta_{\setminus r}}^{T}\left(Q^{n}-Q^{*}\right)\triangle_{\theta_{\setminus r}}.
\end{align}
Then, from Lemma \ref{lem:H-concentrate-result}, choosing $\epsilon=\frac{e^{-2\theta_{\max}^{*}d}}{32d\left(d+1\right)}$,
then with probability at least $1-\varepsilon_{4}$, there is
\begin{align}
\triangle_{\theta_{\setminus r}}^{T}\left(Q^{n}-Q^{*}\right)\triangle_{\theta_{\setminus r}} & \geq-\frac{e^{-2\theta_{\max}^{*}d}}{32d\left(d+1\right)}\left\Vert \triangle_{\theta_{\setminus r}}\right\Vert _{1}^{2}\nonumber \\
 & \geq-\frac{e^{-2\theta_{\max}^{*}d}}{2\left(d+1\right)}\left\Vert \triangle_{\theta_{\setminus r}}\right\Vert _{2}^{2}.
\end{align}
as long as $n\geq\frac{2}{\epsilon^{2}}\log\frac{p^{2}}{\varepsilon_{4}}=2^{11}d^{2}\left(d+1\right)^{2}e^{4\theta_{\max}^{*}d}\log\frac{p^{2}}{\varepsilon_{4}}$.
As a result, there is
\begin{align}
&\delta\ell\left(\triangle_{\theta_{\setminus r}},\tilde{\theta}_{\setminus r}^{*};\mathfrak{X}_{1}^{n}\right) \nonumber \\
& \geq\frac{e^{-2\theta_{\max}^{*}d}}{2\left(d+1\right)}\left\Vert \triangle_{\theta_{\setminus r}}\right\Vert _{2}^{2}-\frac{e^{-2\theta_{\max}^{*}d}}{4\left(d+1\right)}\left\Vert \triangle_{\theta_{\setminus r}}\right\Vert _{2}^{2}\nonumber \\
 & =\frac{e^{-2\theta_{\max}^{*}d}}{4\left(d+1\right)}\left\Vert \triangle_{\theta_{\setminus r}}\right\Vert _{2}^{2},
\end{align}
which completes the proof. 
\end{proof}

\section{\label{appendix-main-proof-no-thd} Proofs of Theorems \ref{theorem-RR-Lasso-no-post} and \ref{theorem-main-tree-graph-noPost}}
First, to prove the ``fixed design'' results in  Proposition \ref{theorem-main-fixed} and Proposition \ref{theorem-main-fixed-tree}, for each vertex $r\in V$, an optimal primal-dual pair
$\left(\hat{\theta}_{\setminus r},\hat{z}_{r}\right)$ is constructed,
where $\hat{\theta}_{\setminus r}\in\mathbb{R}^{p-1}$ is a primal
solution and $\hat{z}_{r}\in\mathbb{R}^{p-1}$ is the associated sub-gradient
vector. They satisfy the zero sub-gradient optimality condition \cite{rockafellar1970convex}
associated with  Lasso  (\ref{eq:lasso-estimator}): 
\begin{equation}
\nabla\ell\left(\hat{\theta}_{\setminus r};\mathfrak{X}_n\right)+\lambda_{n}\hat{z}_{r}=0,\label{eq:zero-gradient-cond}
\end{equation}
where the sub-gradient vector $\hat{z}_{r}$  satisfies
\begin{equation}
\begin{cases}
\hat{z}_{rt}=\textrm{sign}\left(\hat{\theta}_{rt}\right), \textrm{if }\hat{\theta}_{rt}\neq0; & \left(a\right)\\
\left|\hat{z}_{rt}\right|\leq1,\textrm{otherwise}. & \left(b\right)
\end{cases}\label{eq:zero-sub-gradient-z}
\end{equation}
Then, the pair
is a primal-dual optimal solution to (\ref{eq:lasso-estimator}) and
its dual. Further, to ensure that such an optimal primal-dual pair correctly
specifies the signed neighorbood of node $r$,  the sufficient
and necessary conditions  are as follows
\begin{equation}
\begin{cases}
\textrm{sign}\left(\hat{z}_{rt}\right)=\textrm{sign}\left(\theta_{rt}^{*}\right),\textrm{ }\forall\left(r,t\right)\in S, & \left(a\right)\\
\hat{\theta}_{ru}=0,\textrm{}\textrm{ }\forall\left(r,u\right)\in S^{c}\coloneqq E\setminus S. & \left(b\right)
\end{cases}\label{eq:zero-sub-gradient-z-optimal}
\end{equation}
Note that while the regression in (\ref{eq:lasso-estimator}) corresponds
to a convex problem, for $p\gg n$ in the high-dimensional regime,
it is not necessarily strictly convex so that there might be multiple
optimal solutions. Fortunately, the following lemma in \cite{ravikumar2010high}
provides sufficient conditions for shared sparsity among optimal solutions
as well as uniqueness of the optimal solution. 
\begin{lem}
\label{lem:uniquess-solution}(Lemma 1 in \cite{ravikumar2010high}).
Suppose that there exists an optimal primal solution $\hat{\theta}_{\setminus r}$
with associated optimal dual vector $\hat{z}_{r}$ such that $\left\Vert \hat{z}_{S^{c}}\right\Vert _{\infty}<1$.
Then any optimal primal solution $\tilde{\theta}$ must have $\tilde{\theta}_{S^{c}}=0$.
Moreover, if the Hessian sub-matrix $[ \nabla^2\ell\left(\hat{\theta}_{\setminus r};\mathfrak{X}_n\right)]_{SS}$
is strictly positive definite, then $\hat{\theta}_{\setminus r}$
is the unique optimal solution. 
\end{lem}
As a result, using the framework in \cite{ravikumar2010high}, 
we can construct a primal-dual witness $\left(\hat{\theta}_{\setminus r},\hat{z}\right)$
for the Lasso estimator (\ref{eq:lasso-estimator}) as follows:

(a) First, set $\hat{\theta}_{S}$ as the minimizer of the partial
penalized likelihood 
\begin{align}
\hat{\theta}_{S} & =\underset{\theta_{\setminus r}=\left(\theta_{S},0\right)\in\mathbb{R}^{p-1}}{\arg\min}\left\{ \ell\left(\theta_{\setminus r};\mathfrak{X}_n\right)+\lambda_{n}\left\Vert \theta_{S}\right\Vert _{1}\right\} ,\label{eq:lasso-estimator-partial}
\end{align}
and then set $\hat{z}_{S}=\textrm{sign}\left(\hat{\theta}_{S}\right)$. 

(b) Second, set $\hat{\theta}_{S^{c}}=0$ so that condition (\ref{eq:zero-sub-gradient-z-optimal})
(b) holds.

(c) Third, obtain $\hat{z}_{S^{c}}$ from (\ref{eq:zero-gradient-cond})
by substituting the values of $\hat{\theta}_{\setminus r}$ and $\hat{z}_{S}$.

(d) Finally, we need to show that the stated scalings of $\left(n,p,d\right)$
imply that, with high probability, the remaining conditions (\ref{eq:zero-sub-gradient-z}) and  (\ref{eq:zero-sub-gradient-z-optimal})
(a) are satisfied. 

\subsection{\label{appendix:proof-proposition-RR}Proof of Proposition \ref{theorem-main-fixed}}
From Lemma \ref{lem:W_inf_norm} (a),
if the regularization parameter $\lambda_{n}$ satisfies $\lambda_{n}\geq\frac{8\left(2-\alpha\right)}{\alpha}\sqrt{\frac{\log p}{n}}$, then with probability greater than $1-2\exp\left(-c\lambda_n^2n\right)$ there is 
\begin{equation}
\left\Vert W^{n}\right\Vert _{\infty}\leq\frac{\alpha}{2-\alpha}\frac{\lambda_{n}}{2}\leq\frac{\lambda_{n}}{2},\label{eq:wn-inf-inequality}
\end{equation}
so that the
condition in Lemma \ref{lem:L2-consistency} is also satisfied. The
zero-subgradient condition (\ref{eq:zero-gradient-cond})
can be equivalently re-written as follows
\begin{equation}
\begin{cases}
Q_{S^{c}S}^{n}\left(\hat{\theta}_{S}-\tilde{\theta}_{S}^{*}\right)=W_{S^{c}}^{n}-\lambda_{n}\hat{z}_{S^{c}},\\
Q_{SS}^{n}\left(\hat{\theta}_{S}-\tilde{\theta}_{S}^{*}\right)=W_{S}^{n}-\lambda_{n}\hat{z}_{S},
\end{cases}\label{eq:zero-mean-cond-block-form}
\end{equation}
where we have used the fact that $\hat{\theta}_{S^{c}}=0$ from the
primal-dual construction, and also the result $\tilde{\theta}^*_{S^{c}}=0$ from Lemma \ref{lemma-recaled-solution}. After some simple algebra, we obtain
\begin{equation}
W_{S^{c}}^{n}-Q_{S^{c}S}^{n}\left(Q_{SS}^{n}\right)^{-1}W_{S}^{n}+\lambda_{n}Q_{S^{c}S}^{n}\left(Q_{SS}^{n}\right)^{-1}\hat{z}_{S}=\lambda_{n}\hat{z}_{S^{c}}.\label{eq:zero-mean-condition-rearrage}
\end{equation}
For strict dual feasibility, from (\ref{eq:zero-mean-condition-rearrage}), we obtain
\begin{align}
\left\Vert \hat{z}_{S^{C}}\right\Vert _{\infty} & \leq|||Q_{S^{C}S}^{*}\left(Q_{SS}^{*}\right)^{-1}|||_{\infty}\left[\frac{\left\Vert W_{S}^{n}\right\Vert _{\infty}}{\lambda_{n}}+1\right]\nonumber \\
&+\frac{\left\Vert W_{S^{C}}^{n}\right\Vert _{\infty}}{\lambda_{n}}\nonumber \\
 & \leq\left(1-\alpha\right)+\left(2-\alpha\right)\frac{\left\Vert W^{n}\right\Vert _{\infty}}{\lambda_{n}}\nonumber \\
 & \leq\left(1-\alpha\right)+\left(2-\alpha\right)\frac{1}{2-\alpha}\frac{\alpha}{2}\nonumber \\
 & =1-\frac{\alpha}{2}<1\label{eq:zc}, 
\end{align}
with probability converging to one. 
For correct sign recovery, it suffices to show that $\left\Vert \hat{\theta}_{S}-\tilde{\theta}_{S}^{*}\right\Vert _{\infty}\leq\frac{\tilde{\theta}_{\min}^{*}}{2}$.
From Lemma \ref{lem:L2-consistency} (since (\ref{eq:wn-inf-inequality})
holds), we have
\begin{align}
\frac{2}{\theta_{\min}^{*}}\left\Vert \hat{\theta}_{S}-\tilde{\theta}_{S}^{*}\right\Vert _{\infty}  \leq\frac{2}{\theta_{\min}^{*}}\left\Vert \hat{\theta}_S-\tilde{\theta}_{S}^{*}\right\Vert _{2}
 \leq\frac{6}{\tilde{\theta}_{\min}^{*}C_{\min}}\lambda_{n}\sqrt{d}.
\end{align}
As a result, if $\tilde{\theta}_{\min}^{*}\geq\frac{6\lambda_{n}\sqrt{d}}{C_{\min}}$,
or $\lambda_{n}\leq\frac{\tilde{\theta}_{\min}^{*}C_{\min}}{6\sqrt{d}}$,
the condition $\left\Vert \hat{\theta}_{S}-\tilde{\theta}_{S}^{*}\right\Vert _{\infty}\leq\frac{\tilde{\theta}_{\min}^{*}}{2}$
holds. In the paramagnetic phase, from Lemma \ref{lemma-recaled-solution}, there is $\tilde{\theta}_{\min}^{*}=\frac{\tanh\left(\theta_{0}\right)}{1+\left(d-1\right)\tanh^{2}\left(\theta_{0}\right)}$. Substituting these results lead to Proposition \ref{theorem-main-fixed}.

\subsection{Proof of Proposition \ref{theorem-main-fixed-tree}}
\label{appendix:prop-tree-proof}
The proof of Proposition \ref{theorem-main-fixed-tree} is the same as that of Proposition \ref{theorem-main-fixed}
in Appendix \ref{appendix:proof-proposition-RR}, except that different conditions in Lemma \ref{lem:W_inf_norm} (b) are used, and that we need to impose the assumptions that the population Hessian $Q^*$ satisfies both conditions (C1) and (C2) for the considered general graphs.  

\subsection{\label{appendix:proof-theorem-1}Proof of Theorem \ref{theorem-RR-Lasso-no-post}}
Now we are ready to prove the main results in Theorem \ref{theorem-RR-Lasso-no-post}. 
As shown in Lemma \ref{lem:RR-graph-(C1)(C2)}, for RR graphs with uniform couplings, the population Hessian $Q^*$ for Lasso already satisfies both conditions (C1) and (C2), so that  assumptions of (C1) and (C2) can be dropped for RR graphs.  

Next,  using large deviation analysis as \cite{ravikumar2010high}, we prove that the sample Hessian  $Q^n$ of Lasso satisfies the same properties as the population Hessian  $Q^*$  with high probability with large enough samples. 
\begin{lem}
\label{lem:population-to-sample-Hessian} Consider an Ising model on a RR graph $G=\left(V,E\right)\in\mathcal{G}_{p,d}$ with regular  node degree $d$ and uniform couplings $\theta^{*}_{r,t} = \theta_0,\forall (r,t)\in E$. 
Then, for any $\delta>0$, there are  some positive constants $A,B,K$ 
\begin{align}
&\mathbb{P}\left(\Lambda_{\min}\left(Q^{n}_{SS}\right)\leq C_{\min}-\delta\right)  \leq2\exp\big(-A\frac{\delta^{2}n}{d^{2}}+B\log d\big),\label{eq:A3-lemma} \\
&\mathbb{P}\left(\interleave Q_{S^{c}S}^{n}\left(Q_{SS}^{n}\right)^{-1}\interleave_{\infty}\geq1-\frac{\alpha}{2}\right)  \leq2\exp\big(-K\frac{n}{d^{3}}+\log p\big),
\end{align}
where $C_{\min} $ and $\alpha$ are $C_{\min} = 1 - \tanh^{2}\theta_{0}$ and $\alpha = 1 - \tanh\theta_{0}$.
\end{lem}
\begin{proof}
The proof is the same as Lemma 5 and Lemma 6 in \cite{ravikumar2010high}, with the only difference that the variance function term does not exist, by substituting into the the results of $C_{\min} $ and $\alpha$ in the Lemma \ref{lem:RR-graph-(C1)(C2)}. 
\end{proof}

Lemma \ref{lem:population-to-sample-Hessian} demonstrates that the sample Hessian  $Q^n$ satisfies both conditions (C1) and (C2) with high probability as long as $n\geq Ld^3\log p$ for some constant $L$. As the results of Proposition \ref{theorem-main-fixed}  builds on top of the assumption that the sample Hessian $Q^n$ satisfies (C1) and (C2), we readily obtain that all results of Proposition \ref{theorem-main-fixed} will hold for if we replace the requirement that the sample Hessian  $Q^n$ satisfies both conditions (C1) and (C2) by an extra scaling requirement $n\geq Ld^3\log p$ for some constant $L$ independent of $\left(n,p,d\right)$.

Consequently, by combining Lemma \ref{lem:RR-graph-(C1)(C2)}, Lemma \ref{lem:population-to-sample-Hessian}, and Proposition \ref{theorem-main-fixed} and substituting the specific results of $C_{\min} $ and $\alpha$ in Lemma \ref{lem:RR-graph-(C1)(C2)}, after some algebra, we readily obtain Theorem \ref{theorem-RR-Lasso-no-post}, which completes the proof.

\subsection{Proof of Theorem \ref{theorem-main-tree-graph-noPost}}
The proof of Theorem \ref{theorem-main-tree-graph-noPost} is the same as that of Theorem \ref{theorem-RR-Lasso-no-post}
in Appendix \ref{appendix:proof-theorem-1}, except that different conditions in Lemma \ref{lem:W_inf_norm} (b) are used.

\section{\label{appendix-main-proof-with-thd}Proofs of Theorems \ref{theorem-square-error} and  \ref{theorem-Post-Thresholding}}
\subsection{Proofs of Theorem \ref{theorem-square-error}}
This is done through Proposition \ref{prop:square-error}
by evaluating the two conditions (C3) and (C4). First, let $\varepsilon_{3}=\frac{2\varepsilon_{1}}{3}>0$
in Lemma \ref{lem:W_inf}. Then, by setting $\lambda_{n}=4\sqrt{\frac{\log\frac{3p}{\varepsilon_{1}}}{n}}$,
if $n\geq d^{2}\log\frac{3p}{\varepsilon_{1}}$ , with probability
at least $1-\frac{2\varepsilon_{1}}{3}$, we have $\left\Vert W^{n}\right\Vert _{\infty}\leq2\sqrt{\frac{\log\frac{3p}{\varepsilon_{1}}}{n}}=\frac{\lambda_{n}}{2}$
so that condition (C3) satisfies as long as $n\geq d^{2}\log\frac{3p}{\varepsilon_{1}}$.
Second, let $\varepsilon_{4}=\frac{\varepsilon_{1}}{3}>0$ in Lemma
\ref{lem:strong-restricted}. From Lemma \ref{lem:strong-restricted},
with probability at least $1-\frac{\varepsilon_{1}}{3}$, the restricted
strong convexity condition is satisfied with the value $\kappa=\frac{e^{-2\theta_{\max}^{*}d}}{4\left(d+1\right)}$
when $n>2^{11}d^{2}\left(d+1\right)^{2}e^{4\theta_{\max}^{*}d}\log\frac{3p^{2}}{\varepsilon_{1}}$.
Then, the relation $R\geq3\sqrt{d}\frac{\lambda_{n}}{\kappa}$ in
Proposition \ref{prop:square-error} reads
\begin{equation}
R>3\sqrt{d}4\sqrt{\frac{\log\frac{3p}{\varepsilon_{1}}}{n}}\left(\frac{e^{-2\theta_{\max}^{*}d}}{4\left(d+1\right)}\right)^{-1}.\label{eq:R-result-cond}
\end{equation}
To find a value of $R$ that satisfies (\ref{eq:R-result-cond}),
we can choose $R=2/\sqrt{d}$. Then from (\ref{eq:R-result-cond}),
the number of samples $n$ needs to satisfy
\begin{equation}
n>9\cdot2^{10}d^{2}\left(d+1\right)^{2}e^{4\theta_{\max}^{*}d}\log\frac{3p^{2}}{\varepsilon_{1}}.\label{eq:n-result-cond1}
\end{equation}
As a result, when $n\geq2^{14}d^{2}\left(d+1\right)^{2}e^{4\theta_{\max}^{*}d}\log\frac{3p^{2}}{\varepsilon_{1}}$,
the condition  (C4) satisfies with probability at least $1-\frac{\varepsilon_{1}}{3}$.
Based on the union bound, both condition (C3) and condition  (C4)  will be
simultaneously satisfied with probability at least $1-\varepsilon_{1}$,
which completes the proof by using Proposition \ref{prop:square-error}. 

\subsection{Proofs of Theorem  \ref{theorem-Post-Thresholding}}
First consider any
fixed vertex $r\in V$, if the square error $\left\Vert \hat{\theta}_{\setminus r}-\tilde{\theta}_{\setminus r}^{*}\right\Vert _{2}\leq\frac{\tilde{\theta}_{\min}^{*}}{2}$,
then it is guaranteed that the absolute difference of each element
of $\hat{\theta}_{\setminus r}$ and $\tilde{\theta}_{\setminus r}^{*}$
is less than $\frac{\tilde{\theta}_{\min}^{*}}{2}$ so that one can
perfectly recover all its correct neighbors with a thresholding $\frac{\tilde{\theta}_{\min}^{*}}{2}$.
According to Theorem \ref{theorem-square-error}, with probability
$1-\varepsilon_{1}$, when $n\geq2^{14}d^{2}\left(d+1\right)^{2}e^{4\theta_{\max}^{*}d}\log\frac{3p^{2}}{\varepsilon_{1}}$,
there is $\left\Vert \hat{\theta}_{\setminus r}-\tilde{\theta}_{\setminus r}^{*}\right\Vert _{2}\leq2^{6}\sqrt{d}\left(d+1\right)e^{2\theta_{\max}^{*}d}\sqrt{\frac{\log\frac{3p}{\varepsilon_{1}}}{n}}$.
Further, let $2^{6}\sqrt{d}\left(d+1\right)e^{2\theta_{\max}^{*}d}\sqrt{\frac{\log\frac{3p}{\varepsilon_{1}}}{n}}\leq\frac{\tilde{\theta}_{\min}^{*}}{2}$,
we obtain that $n\geq2^{14}\left(\tilde{\theta}_{\min}^{*}\right)^{-2}d\left(d+1\right)^{2}e^{4\theta_{\max}^{*}d}\log\frac{3p}{\varepsilon_{1}}$.
Consequently, with at least probability $1-\varepsilon_{1}$ we have
$\left\Vert \hat{\theta}_{\setminus r}-\tilde{\theta}_{\setminus r}^{*}\right\Vert _{2}\leq\frac{\tilde{\theta}_{\min}^{*}}{2}$
and thus correct neighbors are recovered for any fixed $r\in V$ whenever
\begin{equation}
n\geq\max\left\{ d,\left(\tilde{\theta}_{\min}^{*}\right)^{-2}\right\} 2^{14}d\left(d+1\right)^{2}e^{4\theta_{\max}^{*}d}\log\frac{3p^{2}}{\varepsilon_{1}}.\label{eq:n-for-structure}
\end{equation}
Then, setting $\varepsilon_{2}=p\varepsilon_{1}$ and using the union
bound for all vertices $r\in V$, we have
\begin{equation}
\mathbb{P}\left(\left\Vert \hat{\theta}_{\setminus r}-\tilde{\theta}_{\setminus r}^{*}\right\Vert _{2}>\frac{\tilde{\theta}_{\min}^{*}}{2},\exists\;r\in V\right)\leq p\varepsilon_{1}=\varepsilon_{2},
\end{equation}
so that
\begin{equation}
\mathbb{P}\left(\left\Vert \hat{\theta}_{\setminus r}-\tilde{\theta}_{\setminus r}^{*}\right\Vert _{2}\leq\frac{\tilde{\theta}_{\min}^{*}}{2},\forall\;r\in V\right)>1-\varepsilon_{2},
\end{equation}
which completes the proof.

\vfill

\end{document}